\documentclass[12pt, letter]{article}

\usepackage{amsthm}
\usepackage[margin=1in]{geometry}
\usepackage{palatino}
\usepackage{xcolor}
\usepackage[width=\textwidth]{caption}
\usepackage{subcaption}
\usepackage[pdftex]{graphicx}
\usepackage[skip=0pt]{caption}
\usepackage{booktabs}
\usepackage{setspace}
\usepackage{lscape}
\usepackage{bm}
\usepackage[hidelinks]{hyperref}
\hypersetup{
     colorlinks   = false,
     allcolors    = [RGB]{163,31,52}
}
\usepackage{indentfirst}
\usepackage[round]{natbib}
\usepackage[T1]{fontenc}
\usepackage[utf8]{inputenc}
\usepackage{textcomp}
\usepackage{amsmath,amssymb,amsfonts,mathtools}
\usepackage{float}
\usepackage{array}
\usepackage{enumerate}
\usepackage{enumitem}

\renewcommand{\thesection}{\arabic{section}}

\definecolor{mit red}{RGB}{163, 31, 52}

\newtheorem{theorem}{Theorem}

\theoremstyle{definition}
\newtheorem{assumption}{Assumption}
\newtheorem{definition}{Definition}

\DeclareMathOperator{\E}{\mathbb{E}}

\newcommand{\R}{\mathbb{R}}

\newcommand{\ve}{\varepsilon}

\newcommand{\trans}{^{\top}}
\newcommand{\inv}{^{-1}}

\newcommand{\m}{m_0}               % true transition function
\newcommand{\Jz}{J_0}             % true network Jacobian
\newcommand{\alphaij}{\alpha_{ij}}
\newcommand{\psiij}{\psi_{ij}}
\newcommand{\thetaij}{\theta_{ij}}
\newcommand{\hattheta}{\hat{\theta}_{ij}^{\mathrm{deb}}}
\newcommand{\Zt}{Z_t}             % state at time t
\newcommand{\embed}{\mathbf{e}}    % GNN node embedding

\title{%
  \Large Network Recovery from Cascade Data:\\
  A Debiased Jacobian-Based Machine Learning Approach%
  \footnote{Comments welcome.
  Lei Huang (\texttt{leihuang@mit.edu}) is a Ph.D.\ candidate
  at the MIT Sloan School of Management.  All errors are my own.}
}
\author{Lei Huang}

\begin{document}
\date{June 1, 2026}
\maketitle

%% ── Abstract ────────────────────────────────────────────────────────────────
\begin{abstract}
\singlespacing
\noindent
Many important outcomes unfold as dynamic cascades, including product adoption, disease spread, financial distress, and information diffusion. A central challenge is to recover the hidden influence network behind these cascades. Existing methods typically assume a specific diffusion model, and their performance degrades substantially when that assumption is misspecified. We propose \emph{CascadeNet}, a Jacobian-based machine learning framework for network recovery that does not require specifying a diffusion mechanism. The key idea is that the underlying influence structure can be characterized by the Jacobian of the one-step transition function. CascadeNet first constructs a flexible estimator of the transition function, and further applies Neyman-orthogonal debiasing via the Riesz representer, so that the debiased Jacobian is $\sqrt{n}$-consistent and asymptotically normal, enabling formal inference on the network structure. We validate {CascadeNet} in both a simulation exercise and a real-world empirical application. In simulations, where the data-generating process is known, {CascadeNet} achieves the highest network recovery accuracy across nine common data-generating processes. In an empirical application to COVID-19 transmission across Spain's 52 provinces, {CascadeNet} recovers transmission networks that are significantly correlated with the true inter-province mobility network, whereas networks recovered by baseline methods show no significant alignment with the ground truth.

\bigskip
\noindent \emph{\textbf{Keywords}}: network inference, Jacobian, debiased machine learning.

\end{abstract}
\newpage
\doublespacing

%% ════════════════════════════════════════════════════════════════════════════
\section{Introduction}
\label{sec:intro}

Networks shape economic and social outcomes.  Whether a firm is designing a viral marketing campaign, a platform is studying how product reviews propagate across its user base, or a public health agency is tracing disease transmission, the effectiveness of any intervention depends on the structure of the underlying network \citep{kempe2003maximizing, ballester2006who, bramoulle2009identification}.  Yet in practice, the network is rarely observed directly.  Firms do not see who actually persuaded whom to buy; platforms do not see which user influenced another's review; public health agencies do not see which infected individual transmitted the pathogen to which other individual.  What decision-makers can often observe are the cascades: the sequences of adoption, engagement, or infection events that propagate across the unobserved network over time.  The central task is then network recovery: inferring the hidden influence structure from these observed cascades, so that decision-makers can identify the relationships through which influence flows and target their interventions accordingly.

The economic stakes of network recovery are large.  Influence-maximization campaigns, lockdown decisions in epidemiology, contact tracing protocols, and systemic-risk monitoring all depend on ranking edges by their strength: a misranked edge can mean a wasted promotional budget, a missed superspreader, or a financial contagion channel that is overlooked.  Yet the same data poverty that makes network recovery valuable also makes it hard.  An analyst typically observes only a handful of cascades over a moderately sized population, and must infer at least an $N \times N$ influence matrix, where $N$ is the number of nodes or players, or a higher-dimensional object when more complex interactions are assumed, from this thin record.

To tackle this challenge, a substantial literature has developed methods that assume a very specific functional form for the underlying transition process: for instance, the exponential and power-law transmission kernels in \citet{gomezrodriguez2010inferring} and \citet{gomezrodriguez2011uncovering}, the pairwise Markov transition probabilities in \citet{ramezani2024dani}, and the particular GNN architectures in \citet{qiu2018deepinf} and \citet{murphy2021deep}. These functional-form assumptions provide identification power that the data alone cannot, but when the assumed transition process is misspecified, the resulting error can be large. These methods also share a second drawback: they do not provide formal statistical inference on the recovered edges, which makes it hard for practitioners to communicate uncertainty about the recovered structure and to make informed decisions based on it. We propose \emph{CascadeNet}, a Jacobian-based machine learning framework for network recovery that addresses both limitations.

The key insight of {CascadeNet} is that, in a Markovian system, the influence network can be characterized by the Jacobian of the one-step transition function.  This insight allows us to formulate network recovery as a problem of estimating the transition function and then differentiating it.  By replacing parametric likelihood with a flexible learner of the one-step transition function, {CascadeNet} avoids the misspecification risk that plagues classical methods and enables application to a wider range of diffusion mechanisms, outcome types, and covariate structures.  We further apply Neyman-orthogonal debiasing based on the Riesz representer, a technique from the double machine learning literature, to address bias from regularization in the flexible machine learning estimator.  This allows us to recover an unbiased estimate of the network Jacobian using only observable prediction residuals, and to prove $\sqrt{n}$-consistency and asymptotic normality for each debiased Jacobian entry, enabling formal inference on individual edge weights.

To lay out the problem, consider a panel of $N$ agents, the nodes of the network, observed over time.  The state vector $Y_t$ records each agent's outcome at time $t$, such as adoption status, infection status, or case counts, and evolves according to a one-step transition function, $Y_{t+1} = m_0(Y_t, X_t) + \varepsilon_{t+1}$, where $X_t$ collects observed covariates such as treatment assignments, demographics, or policy variables, and $m_0$ is left fully unspecified.  This formulation nests classic diffusion models, including independent cascade, linear threshold, SIS, SIR, and Hawkes, as special cases, and accommodates binary, count, or continuous outcomes.

CascadeNet has two components.  The first is a flexible estimator $\hat{m}$ for the one-step transition function $m_0$, and the same estimator is used iteratively across time steps.  The simplest specification is a linear-index model with a learnable interaction matrix $J$ and a flexible link function (sigmoid for binary adoption, identity or MLP for continuous outcomes), which already strictly nests classical baselines; richer specifications such as graph neural networks with attention-based aggregation can be plugged in when the application demands them.  When the model class is large enough, the misspecification risk is minimal. The second component is the central observation that turns the estimated transition function into a network estimator: the Jacobian of $m_0$, the $N \times N$ matrix $J_0$ with entries $J_0[i,j] = \partial m_i / \partial y_j$, directly encodes the influence network.  In a Markovian system, the $(i,j)$ entry is nonzero if and only if agent $j$ can marginally influence agent $i$, and its magnitude measures the strength of that influence; network recovery therefore reduces to differentiating the estimated transition function, a single operation that works regardless of which parametric diffusion model, if any, generated the data.

However, this flexibility of the transition function estimator comes at a cost.  Common machine learning estimators are regularized to control variance, and regularization introduces bias.  For instance, $\ell_1$ and $\ell_2$ penalties shrink the estimated parameters toward zero, and therefore bias the estimated Jacobian; in a small-sample setting where regularization is most needed, this attenuation can be severe enough to destroy the network signal entirely.  Off-the-shelf machine learning alone is therefore not enough.  To address this bias, we apply Neyman-orthogonal debiasing based on the Riesz representer \citep{chernozhukov2022automatic, hirshberg2021augmented}, a technique from the double machine learning literature \citep{chernozhukov2018double}.  Intuitively, the Riesz representer acts as an integration-by-parts operator: it converts errors in the prediction $\hat{m}$ into corrections for the derivative $\partial \hat{m}_i / \partial y_j$, allowing us to recover an unbiased estimate of the network Jacobian using only the observable prediction residual.  The resulting debiased estimator is $\sqrt{n}$-consistent and asymptotically normal for each individual edge, even though $m_0$ itself is estimated by flexible machine learning.  We illustrate the necessity and effectiveness of this debiasing strategy in a controlled tanh experiment with a closed-form ground truth: the naive plug-in Jacobian achieves a Pearson correlation of only $r = 0.07$ with the true Jacobian, while the Riesz correction restores it to $r = 0.77$, more than a tenfold improvement over the naive plug-in.

We validate CascadeNet on both synthetic and real data.  On synthetic cascades generated from nine common diffusion models from the literature, namely pairwise transmission (IC, LT), continuous-time SI (Exp, PL), epidemic models with recovery/removal (SIS, SIR), aggregate-effect models (Complex, Hawkes), and a flexible nonlinear DGP, CascadeNet achieves the highest recovery accuracy in all nine settings, outperforming established methods by up to 50 percentage points of Pearson correlation on diffusion models that violate the baselines' kernel assumptions.  On real COVID-19 case data from Spain's 52 provinces, CascadeNet is the only method whose estimated edges correlate significantly with true inter-province mobility flows; every classical baseline produces rankings indistinguishable from random. 

CascadeNet makes two methodological contributions.  First, it provides a flexible estimator for the influence network: by replacing parametric likelihood with a flexible learner of the one-step transition function, CascadeNet avoids the misspecification risk that plagues classical methods and enables application to a wider range of diffusion mechanisms, outcome types, and covariate structures.  Second, it provides formal statistical inference: we extend the Riesz representer framework to an $N \times N$ matrix of partial derivatives, a setting not previously addressed in the debiased ML literature, and prove $\sqrt{n}$-consistency and asymptotic normality for each debiased Jacobian entry.  This delivers what classical cascade-inference methods cannot: confidence intervals for individual edge weights, hypothesis tests for whether a particular edge exists, and a principled basis for communicating uncertainty about the recovered network.  

Managerially, CascadeNet equips decision-makers with the ability to map influence relationships from observed cascade data alone, with formal confidence intervals on individual edges, without requiring the analyst to specify how influence propagates.  In practical settings such as a marketing manager planning a seeding campaign, a public-health authority allocating contact-tracing resources, or a regulator monitoring contagion in a financial network, this combination of flexibility and inferential rigor is what allows the recovered network to be used as a serious input to consequential decisions, rather than as a heuristic ranking with unknown reliability.

The remainder of the paper is organized as follows.  Section~\ref{sec:literature} reviews the related literature.  Section~\ref{sec:cascade} presents the CascadeNet framework, including the data-generating process, the estimator, the Jacobian representation of the network, the debiasing procedure, and the asymptotic theory.  Section~\ref{sec:simulation} reports synthetic validation.  Section~\ref{sec:covid} applies the method to COVID-19 data.  Section~\ref{sec:managerial} discusses managerial implications and concludes.

%% ════════════════════════════════════════════════════════════════════════════
\section{Related Literature}
\label{sec:literature}

Our work contributes to several streams of research.  We discuss each in turn and highlight how CascadeNet relates to and extends prior work.

\paragraph{Cascade network inference.}
The problem of inferring networks from cascade data has attracted considerable attention in the network science literature.  The foundational contributions are NetInf \citep{gomezrodriguez2010inferring} and NetRate \citep{gomezrodriguez2011uncovering}, which infer diffusion networks by maximizing pairwise cascade likelihoods under exponential or power-law transmission kernels and rely on submodular greedy selection or convex optimization for tractability.  CONNIE \citep{myers2010network} casts the inference problem as a convex program with a regularized likelihood, providing the first scalable formulation with provable optimality.  More recently, DANI \citep{ramezani2024dani} takes a Markov-transition approach that estimates pairwise transmission probabilities directly from the observed panel and emphasizes preserving topological structure.  A parallel line of work uses deep learning to model cascade dynamics: DeepInf \citep{qiu2018deepinf} predicts user-level adoption with attention-based GNNs, and \citet{murphy2021deep} learn contagion dynamics on complex networks with graph neural networks, demonstrating strong predictive performance on epidemiological data.  Despite their methodological diversity, these methods share two features that motivate our work.  First, each imposes a specific parametric assumption on how influence propagates (an exponential hazard, a power-law kernel, a linear Markov transition, or a particular GNN architecture), and performance is sensitive to whether that assumption matches the true diffusion process.  Second, none provide formal statistical inference on individual edges: the output is a ranked list or a point estimate, but not a confidence interval or a hypothesis test.  Our contribution is to develop a flexible estimator that avoids the parametric kernel assumption, accommodates heterogeneous diffusion mechanisms within a single framework, and provides $\sqrt{n}$-consistent debiasing guarantees that enable formal edge-level inference.

\paragraph{Networks in marketing.}
A rich marketing literature studies how networks shape consumer behavior, peer effects, and the diffusion of new products.  \citet{kempe2003maximizing} formalize the influence-maximization problem of selecting seed nodes to maximize cascade reach, which has become a central decision problem in viral marketing and presupposes knowledge of the influence network.  Empirical work on word-of-mouth and social contagion in marketing \citep{ballester2006who, chu2016quantifying, eckles2016peer} similarly takes the network as known or proxies it with observable ties (e.g., friendships, co-purchase records, geographic proximity).  When the network is not directly observable, marketing researchers have used model-based approaches to infer peer effects from panel data, but these typically assume linear-in-means specifications \citep{manski1993identification, bramoulle2009identification} that may not capture the nonlinear dynamics of cascade adoption.  Our work complements this literature in three ways.  First, we provide a method to infer the influence network directly from observed cascades, without requiring auxiliary network data, thereby expanding the set of settings in which network-based marketing analysis is feasible.  Second, by supplying confidence intervals on individual edges, we enable practitioners to communicate the statistical reliability of recovered seeding targets, a feature that has been missing from existing influence-maximization pipelines.  Third, the flexible form of CascadeNet allows it to capture richer adoption dynamics (threshold effects, saturation, social reinforcement) than linear-in-means models, while still delivering tractable inference.

\paragraph{Debiased machine learning.}
Our inference theory builds on the double/debiased machine learning (DML) framework of \citet{chernozhukov2018double} and the automatic-debiasing approach via the Riesz representer developed by \citet{chernozhukov2022automatic} and \citet{hirshberg2021augmented}.  The general principle is that for a smooth functional of a nuisance function estimated by machine learning, an orthogonal score correction restores the parametric $\sqrt{n}$ rate of convergence even when the nuisance estimator converges only at $n^{-1/4}$.  This framework has been applied successfully to a wide range of settings, including heterogeneous treatment effects, policy learning, structural parameter estimation, and causal effects with high-dimensional controls \citep{farrell2021deep, farrell2020individual, ye2025deep}.  Two features of our setting are new relative to existing applications.  First, the target of inference is not a low-dimensional structural parameter but an $N \times N$ matrix of partial derivatives, the network Jacobian, with an entry-specific orthogonal score for each pair $(i,j)$.  Showing that the Riesz representer admits a closed form in the linear-index case, and a tractable loss-based approximation in the GNN case, is one of the technical contributions of this paper.  Second, the underlying data are panel cascade trajectories rather than i.i.d.\ cross-sectional observations, which requires care in setting up the cross-fitting scheme and in justifying the asymptotic argument under repeated sampling of trajectories.  More broadly, our paper extends the DML toolkit from causal effect estimation, where it has dominated, to network recovery, where we believe it has equally large practical implications.

\paragraph{Network econometrics.}
A substantial econometrics literature studies identification and estimation of causal effects in network settings, typically taking the network as observed and analyzing how peer interactions, spillovers, and equilibrium effects shape outcomes.  \citet{manski1993identification} formulates the reflection problem that limits identification of endogenous peer effects, and \citet{bramoulle2009identification} establish identification conditions when the network is known.  More recent work studies causal inference under network interference \citep{eckles2017design, leung2022causal} and equilibrium effects of policy interventions \citep{wager2021experimenting, jiang2022estimating}, again under the assumption that the relevant network structure is observed by the analyst.  Our paper sits upstream of this literature: we provide a method to recover the network itself from cascade data, with formal inference on individual edges, which the analyst can then plug into the standard network-econometrics pipeline.  The combination of a flexible first stage with debiased $\sqrt{n}$-inference makes the recovered network suitable as an input to downstream causal analyses without inheriting the slow convergence rates that would otherwise compromise the validity of second-stage estimates.

%% ════════════════════════════════════════════════════════════════════════════
\section{CascadeNet for Network Recovery}
\label{sec:cascade}

\subsection{The Problem}
\label{sec:problem}

We observe a panel of $C$ independent trajectories (cascades), where $C$ can be as small as one.  In each trajectory $c$, a population of $N$ agents (customers, regions, institutions) evolves according to
\begin{equation}
  \label{eq:dgp}
  Y_{c,t+1} = \m(Y_{c,t},\, X_{c,t};\, \theta) + \ve_{c,t+1},
  \qquad
  \E[\ve_{c,t+1} \mid Y_{c,t}, X_{c,t}, \theta] = 0,
\end{equation}
where $Y_{c,t} \in \R^N$ is the vector of agent states at time $t$, $X_{c,t} \in \R^{N \times d}$ collects agent covariates that may vary over time (e.g., demographics, promotional exposure, treatment assignments, or policy variables), $\theta$ denotes the parameters that govern the transition function, $\m$ is the unknown transition function, and $\ve_{c,t+1}$ are mean-zero idiosyncratic shocks.  The researcher observes the panel $\{Y_{c,t}, X_{c,t}\}$ for $c = 1, \ldots, C$ and $t = 1, \ldots, T$.

For instance, in a viral marketing setting, $Y_{c,t}$ records adoption status across customer segments and $X_{c,t}$ captures time-varying promotional exposure or static demographics; each cascade corresponds to a campaign or product launch, and $\m$ is the adoption response function that maps the current adoption pattern and marketing variables into next-period adoption probabilities.  In an epidemiological setting, $Y_{c,t}$ records disease incidence across regions and $X_{c,t}$ captures vaccination rates, mobility restrictions, or population density; each cascade corresponds to a wave of the epidemic, and $\m$ is the transmission function that maps current incidence and policy conditions into next-period incidence.

Several features of this formulation are worth highlighting.  First, the formulation is a general one in two senses: the state $Y_{c,t}$ can be binary (recording which agents have adopted by step $t$), continuous (representing case counts, sales, or other real-valued outcomes), or mixed; the transition function $\m$ is left unspecified, nesting the independent cascade, linear threshold, SIS, SIR, Hawkes, and other Markovian diffusion models as special cases.  Second, the parameter $\theta$ is fully flexible: in standard network models $\theta$ includes an $N \times N$ adjacency matrix, but we impose no such structure, and $\theta$ can encode any parametric or flexible specification of the transition function.  Third, the key identifying assumption is Markovian dynamics, namely that $Y_{c,t-1}$ affects $Y_{c,t+1}$ only through $Y_{c,t}$, which is standard in diffusion models; the framework extends to non-Markovian settings by augmenting the state to include lagged values (see Appendix~\ref{app:nonmarkov}).  Lastly, we treat the covariates $X_{c,t}$ as exogenous in order to focus on the network-recovery task, and leave extensions that accommodate endogenous covariates via instrumental or proxy variables to future work.

The goal of network recoveryis to recover the underlying network from this data.  We seek to understand the mechanism of influence in the network: given any state $Y_t$, how does a marginal change in agent $j$'s state affect agent $i$'s next-period outcome?  Existing methods answer this question through the parameters of a specific diffusion model, such as the pairwise transmission probabilities of a linear Markov model or the edge weights of an exponential kernel. We hightlight that, each of these is a special case of the Jacobian of the transition function, $J_0[i,j] = \partial m_{0,i} / \partial y_j$, which captures the marginal influence structure without imposing a specific parametric form.  We therefore target the Jacobian itself, which encodes the influence network in a flexible way that is robust to misspecification of the diffusion mechanism.  The asymptotic theory we develop targets the average Jacobian $\E[\partial m_{0,i}/\partial y_j]$, which summarizes the overall influence of $j$ on $i$ across the state space; the pointwise Jacobian at any given state is also informative and can be readily computed via automatic differentiation.  Analogously, the partial derivative $\partial m_{0,i}/\partial x_j$ captures the direct effect of covariates and can be estimated and debiased within the same framework.

\subsection{The CascadeNet Estimator}
\label{sec:setup}

We propose CascadeNet, a machine learning framework for network recovery.  The core idea is to estimate the transition function $m_0$ from the observed panel using a flexible machine learning estimator $\hat{m}$, then differentiate the estimated function and apply a debiasing correction to recover the network Jacobian.  Concretely, the procedure has three steps: (i) choose a model class $\mathcal{M}$ for the transition function, (ii) fit $\hat{m} \in \mathcal{M}$ to predict $Y_{c,t+1}$ from $(Y_{c,t}, X_{c,t})$ across all observed time steps and cascades, and (iii) compute the average Jacobian of $\hat{m}$ and apply the Riesz debiasing correction described in Section~\ref{sec:debias}.  The model class $\mathcal{M}$ enters only through the prediction step; the inference theory we develop in Section~\ref{sec:theory} applies to any $\mathcal{M}$ that satisfies the smoothness and convergence-rate assumptions stated there.

Choosing $\mathcal{M}$ involves two practical considerations.  First, the class should respect the natural constraints that the data impose on the range of $\m$: if $Y_{c,t}$ is binary, the transition function should map into $[0,1]$ and is most naturally specified through a sigmoid link; if $Y_{c,t}$ is a count or rate, the transition function should be non-negative and is typically specified through an identity, $\mathrm{ReLU}$, or softplus link with appropriate scaling.  Second, $\mathcal{M}$ should be flexible enough to approximate the true transition dynamics but not so complex as to overfit the available cascade data.  In our experiments, a linear-index model (see more details below) with a flexible link function is sufficient to dominate every classical baseline on all nine synthetic DGPs and on the COVID-19 application; richer classes such as multilayer perceptrons or graph neural networks become useful when the data exhibit higher-order interactions, asymmetric attention patterns, or strongly nonlinear feedback that the linear index cannot capture.

For concreteness, we now describe a linear-index specification of the model class $\mathcal{M}$ that we use as the default in our experiments, together with a generic absorbing wrapper that can be applied on top of any specification when adoption is irreversible.  Richer specifications such as graph neural networks can be plugged into the same framework; we describe one such implementation in Appendix~\ref{pf:gnn_grad}.

\paragraph{Default specification: Linear-index model.}
The simplest specification, which we refer to as the linear-index model, takes the form
\begin{equation}
  \label{eq:cascadenet}
  \hat{m}_i(Y_t, X_{t,i}) = \ell\!\bigl(J\, Y_t + k_i\, x_{t,i} + b_i\bigr),
\end{equation}
where $J \in \R^{N \times N}$ is a learnable network weight matrix, $k_i$ captures the direct effect of covariates on agent $i$'s transition, $b \in \R^N$ is a bias vector, and $\ell(\cdot)$ is the link function (sigmoid for binary data; identity or a small MLP for continuous data).  The regularizer $R(m)$ is specified as $\ell_1$ or $\ell_2$ weight regularization on $J$.  Despite its simplicity, this specification is already substantially more general than the parametric baselines used in the cascade-inference literature: NetInf assumes a specific exponential kernel and pairwise independence, DANI assumes a linear Markov transition without a link function, and the classical linear-threshold and independent-cascade likelihoods further restrict the form of $\ell(\cdot)$ and the structure of $J$.  The linear index in \eqref{eq:cascadenet} relaxes all three restrictions: it allows arbitrary pairwise interaction weights, accommodates nonlinearity through the link function, and treats the network weight matrix as a single learnable object rather than a collection of separately estimated transmission probabilities.  A useful by-product of this form is that the Jacobian of $\hat{m}$ admits the closed-form expression $\partial \hat{m}_i / \partial y_j = \ell'(z_i)\, J[i,j]$.

\paragraph{Generic augmentation: Absorbing wrapper.}
For diffusion models in which adoption is irreversible, such as epidemiological settings where the infected state is absorbing or marketing settings where adopting customers remain adopters, the transition function must satisfy the constraint that once an agent adopts, it stays adopted.  Any unconstrained estimator $\hat{m}_i^{\mathrm{raw}}$ in the model class $\mathcal{M}$, whether the linear-index model in \eqref{eq:cascadenet} or a richer specification, can be augmented with an absorbing wrapper that enforces this constraint exactly:
\begin{equation}
  \hat{m}_i^{\mathrm{abs}}(Y_t, X_{t,i})
    = Y_{t,i} + (1 - Y_{t,i})\, \hat{m}_i^{\mathrm{raw}}(Y_t, X_{t,i}).
\end{equation}
The wrapper leaves already-adopted agents in the absorbed state with probability one and lets $\hat{m}_i^{\mathrm{raw}}$ govern the transition only for agents that have not yet adopted.  CascadeNet applies the wrapper when the training panel exhibits irreversible transitions; in applications where irreversibility is substantively known, the analyst can also impose the wrapper directly.

Once the model class $\mathcal{M}$ has been specified, CascadeNet estimates $m_0$ by minimizing a regularized prediction loss over the observed panel:
\begin{equation}
  \label{eq:training}
  \hat{m} \in \arg\min_{m \in \mathcal{M}}
    \frac{1}{CT}\sum_{c=1}^C\sum_{t=1}^{T-1}
    \mathcal{L}\bigl(Y_{c,t+1},\, m(Y_{c,t}, X_{c,t})\bigr)
    + \lambda\, R(m),
\end{equation}
where $\mathcal{L}$ is binary cross-entropy (for binary data) or mean squared error (for continuous data), $\lambda > 0$ is the regularization parameter, and $R(m)$ penalizes model complexity.  In practice we minimize \eqref{eq:training} by stochastic gradient descent (Adam), with $\lambda$ selected by held-out cross-validation on a fold of trajectories disjoint from the one used for the debiased estimator.  When the debiasing step is applied (Section~\ref{sec:debias}), $\hat{m}$ is fit on each cross-fitting fold separately, and the score is evaluated on the held-out fold; this avoids the standard bias from using the same data for nuisance estimation and final inference.

\subsection{Network Recovery and Inference}
\label{sec:tasks}
\label{sec:debias}
\label{sec:theory}

\subsubsection{Defining the target}
\label{sec:target}

The final and most important step of CascadeNet is to recover an influence network from the estimated transition function $\hat{m}$.  Because $\m$ maps the current state $Y_t$ into next-period outcomes $Y_{t+1}$, the natural local measure of influence is its Jacobian with respect to the current state:
\begin{equation}
  \label{eq:Jpoint}
  \frac{\partial \m}{\partial Y}(y, x)
    = \left[\frac{\partial m_{0,i}}{\partial y_j}(y, x)\right]_{i,j=1}^N.
\end{equation}
The entry $\partial m_{0,i}(y, x)/\partial y_j$ measures the marginal effect of agent $j$'s current state on agent $i$'s expected next-period outcome at the state-covariate configuration $(y, x)$.  This yields a model-free notion of directed influence that does not depend on whether the underlying diffusion mechanism is independent cascade, linear threshold, SIS/SIR, Hawkes, or some richer nonlinear process.

In nonlinear environments, however, influence is generally state-dependent: the effect of $j$ on $i$ may be strong in some regions of the state space and weak or zero in others, because of saturation, threshold effects, or absorbing states.  For that reason, we take as our primary network estimand the average Jacobian,
\begin{equation}
  \label{eq:J}
  \Jz
    := \E\!\left[\frac{\partial \m}{\partial Y}(Y_t, X_t)\right]
    = \left[\E\!\left[\frac{\partial m_{0,i}}{\partial y_j}(Y_t, X_t)\right]\right]_{i,j=1}^N.
\end{equation}
Entry $J_0[i,j]$ summarizes the average marginal influence of agent $j$ on agent $i$, averaging over the distribution of states and covariates realized in the data.  We target this average Jacobian because it converts a potentially heterogeneous, state-specific influence structure into a single interpretable edge-weight matrix.  This is often the economically relevant object: it provides a global ranking of influence pathways, supports comparison to external benchmarks such as mobility flows or observed ties, and serves as a stable summary for downstream decision-making.

Targeting the average Jacobian also brings a major statistical advantage.  Unlike a pointwise derivative evaluated at a specific $(y, x)$, the average Jacobian is a smooth population functional of the regression function $\m$, and this makes it amenable to debiasing and $\sqrt{n}$-valid inference.  The pointwise Jacobian remains useful descriptively and can always be computed from $\hat{m}$ by automatic differentiation, but our formal asymptotic results focus on the average Jacobian $J_0$.

Similarly, one can define average derivative effects with respect to the covariates.  If $x_{t,\ell}$ denotes the $\ell$-th covariate entering the transition function, the corresponding average direct-effect matrix is
\begin{equation}
  \label{eq:K}
  K_0[i,\ell]
    := \E\!\left[\frac{\partial m_{0,i}}{\partial x_{t,\ell}}(Y_t, X_t)\right],
\end{equation}
which can be estimated and debiased by the same argument developed below.  We focus on $J_0$ in the remainder of this section because recovering cross-agent influence is the central network problem; the analogous results for $K_0$ are immediate.

\subsubsection{Bias in the naive plug-in}
\label{sec:bias_plugin}

A natural estimator of $J_0[i,j]$ is the plug-in derivative $\E[\partial \hat{m}_i / \partial y_j(Y_t, X_t)]$.  The problem is that $\hat{m}$ is estimated by regularized machine learning.  Regularization is necessary to control variance and avoid overfitting in high-dimensional and nonlinear settings, but it introduces bias: $\ell_1$ and $\ell_2$ penalties shrink the fitted model toward simpler functions, and this shrinkage attenuates the corresponding derivatives.  In the present application, this means the naive Jacobian estimator can systematically understate edge strengths.  To see this, fix a pair $(i, j)$ and define $\thetaij^0 := \E[\partial m_{0,i}/\partial y_j(Y_t, X_t)]$.  We can decompose the target as
\begin{equation}
  \label{eq:bias_decomp}
  \thetaij^0
    = \underbrace{\E\!\left[\frac{\partial \hat{m}_i}{\partial y_j}(Y_t, X_t)\right]}_{\text{plug-in (biased)}}
      + \underbrace{\E\!\left[\frac{\partial (m_{0,i} - \hat{m}_i)}{\partial y_j}(Y_t, X_t)\right]}_{\text{bias term}}.
\end{equation}
The first term is the naive plug-in estimator.  The second term is the bias in the derivative, induced by estimation error in $\hat{m}_i$.  This second term is the key difficulty: it depends on the unknown regression function $m_0$, and is therefore not directly observable.  Debiased network recovery reduces to finding a way to represent and estimate this bias term using observable quantities.

\subsubsection{Riesz representer and the orthogonal score}
\label{sec:riesz_score}

The central difficulty in \eqref{eq:bias_decomp} is that the bias term depends on the unknown regression function $m_0$, and is therefore not directly observable.  Our strategy is to rewrite this derivative bias in a form that can be estimated from data.  The device that makes this possible is the Riesz representer \citep{chernozhukov2022automatic, hirshberg2021augmented}, which converts the derivative functional into an $L^2$ inner product and allows the bias term to be expressed using observable prediction residuals.  This alone is not enough, however, because the Riesz representer must itself be estimated.  To control the resulting first-stage error, we combine the plug-in derivative and the Riesz correction in a Neyman-orthogonal score \citep{neyman1959, chernozhukov2018double}, so that small errors in estimating either nuisance affect the final estimator only at second order.  We now introduce these two ingredients in turn.

We begin with the Riesz representer, which provides the key representation of the derivative bias.  The construction in this subsection is not original to this paper: the Riesz representer for derivative functionals goes back to classical semiparametric theory and has been developed in its modern form by \citet{chernozhukov2022automatic} and \citet{hirshberg2021augmented}; we adopt their formulation directly and merely apply it to the network Jacobian.

\begin{definition}[Riesz Representer; \citealp{chernozhukov2022automatic, hirshberg2021augmented}]
  \label{def:riesz}
  For each pair $(i, j)$, the Riesz representer $\alphaij$ satisfies
  \begin{equation}
    \label{eq:riesz_def}
    \E[\alphaij(\Zt)\, h(\Zt)] = \E\!\left[\frac{\partial h(\Zt)}{\partial y_j}\right]
  \end{equation}
  for all sufficiently smooth test functions $h$ in a function class $\mathcal{H}$ for which $h \mapsto \E[\partial_j h(\Zt)]$ is a bounded linear functional, where $\Zt = (Y_t, X_t)$.
\end{definition}

This definition says that the derivative operator can be represented as an $L^2$ inner product with the function $\alphaij$.  Intuitively, instead of differentiating a function directly, we can multiply it by $\alphaij$ and take expectations to recover the same object.  Applying Definition~\ref{def:riesz} with $h(\Zt) = m_{0,i}(\Zt) - \hat{m}_i(\Zt)$, the bias term in \eqref{eq:bias_decomp} becomes
\begin{equation}
  \label{eq:bias_riesz}
  \E\!\left[\frac{\partial (m_{0,i} - \hat{m}_i)}{\partial y_j}(\Zt)\right]
    = \E\!\bigl[\alphaij(\Zt)\bigl(m_{0,i}(\Zt) - \hat{m}_i(\Zt)\bigr)\bigr].
\end{equation}
This is the key representation: it rewrites the derivative bias as an expectation involving the prediction error $m_{0,i}(\Zt) - \hat{m}_i(\Zt)$.  Although $m_0$ itself is unobservable, we do observe $Y_{t+1,i} = m_{0,i}(\Zt) + \ve_{t+1,i}$ with $\E[\ve_{t+1,i} \mid \Zt] = 0$, so the unobserved prediction error can be replaced, in expectation, by the observable residual $Y_{t+1,i} - \hat{m}_i(\Zt)$.  Together, these two steps deliver what we will refer to as an orthogonal bias correction.  Informally, the correction has two desirable properties.  First, it is a bias correction: starting from the biased plug-in derivative, we add the term $\E[\alphaij(\Zt)(Y_{t+1,i} - \hat{m}_i(\Zt))]$, which estimates the bias from \eqref{eq:bias_riesz} using only quantities the analyst can compute from the data.  Second, it is orthogonal in the Neyman sense: small errors in the two estimated nuisance functions, the regression $\hat{m}$ and the Riesz weight $\hat{\alpha}_{ij}$, do not contaminate the corrected estimator at first order, so the resulting estimator is robust to slow convergence in either nuisance.  This is exactly the property that allows valid edge-level inference at the parametric $\sqrt{n}$ rate even though $m_0$ is estimated by flexible machine learning; we make this formal in the next subsection.

We now define the score that underlies debiased estimation.  For each pair $(i, j)$, let
\begin{equation}
  \label{eq:score}
  \psiij(\Zt, Y_{t+1};\, \theta_{ij}, m, \alpha)
    = \frac{\partial m_i(\Zt)}{\partial y_j}
      + \alphaij(\Zt)\bigl(Y_{t+1,i} - m_i(\Zt)\bigr)
      - \theta_{ij},
\end{equation}
where $\theta_{ij}$ is the target parameter, $m_i$ is the $i$-th coordinate of the regression function, and $\alphaij$ is the corresponding Riesz representer.  At the truth $(\thetaij^0, \m, \alphaij)$, this score satisfies Neyman orthogonality: first-order perturbations in either the regression nuisance $m_i$ or the Riesz nuisance $\alphaij$ do not affect the moment condition (see Appendix~\ref{pf:neyman}).  Intuitively, the first term is the biased plug-in derivative, and the second term uses prediction residuals to correct that bias in a way that is robust to first-stage estimation error.  This orthogonality is what allows edge-level inference at the parametric $\sqrt{n}$ rate even though $\m$ is estimated by flexible machine learning.

For the linear-index model in Section~\ref{sec:setup}, the Riesz representer admits a closed form.  Let $r_i(\Zt) = [Y_t\trans,\, x_{t,i}\trans,\, 1]\trans \in \R^{N+d+1}$ and let $u_j$ denote the basis vector corresponding to the $j$-th coordinate of $Y_t$.  Then
\begin{equation}
  \alphaij(\Zt) = r_i(\Zt)\trans\, C_i\inv\, u_j,
  \qquad
  C_i = \E[r_i(\Zt)\, r_i(\Zt)\trans].
\end{equation}
For richer model classes, $\alphaij$ can be estimated numerically using the loss-based method of \citet{chernozhukov2022automatic}.

\subsubsection{Algorithm and inference}
\label{sec:algorithm_inference}

We now summarize the full debiased network-recovery procedure.

\begin{figure}[ht]
\centering
\begin{minipage}{0.95\textwidth}
\hrule\smallskip
\noindent\textbf{Algorithm~1.}\ \ CascadeNet Debiased Network Recovery
\smallskip\hrule\smallskip
\noindent\textit{Input:} panel $\{(Y_{c,t}, X_{c,t})\}_{c \le C,\, t \le T_c}$; class $\mathcal{M}$; folds $K$; regularization $\lambda$.

\noindent\textit{Output:} $\hat{J}^{\mathrm{deb}}$, standard errors, CIs, and edge $p$-values.
\smallskip\hrule\smallskip
\begin{enumerate}[leftmargin=2.2em,itemsep=1pt,topsep=2pt]
  \item Form transitions $(Z_{c,t}, Y_{c,t+1})$ with $Z_{c,t} = (Y_{c,t}, X_{c,t})$, and partition trajectories $c = 1, \ldots, C$ into $K$ folds $I_1, \ldots, I_K$.
  \item For each fold $k = 1, \ldots, K$, using only trajectories not in $I_k$:
  \begin{enumerate}[leftmargin=2em,itemsep=0pt,topsep=0pt,label=(\alph*)]
    \item fit $\hat{m}^{(-k)} \in \mathcal{M}$ by minimizing \eqref{eq:training};
    \item estimate the Riesz representers $\hat{\alpha}_{ij}^{(-k)}$ (closed form for the linear index, numerical loss otherwise);
    \item for each held-out $(c, t)$ with $c \in I_k$, evaluate
      $\hat{\psi}_{ij,c,t} = \partial_{y_j} \hat{m}_i^{(-k)}(Z_{c,t}) + \hat{\alpha}_{ij}^{(-k)}(Z_{c,t})\bigl(Y_{c,t+1,i} - \hat{m}_i^{(-k)}(Z_{c,t})\bigr)$.
  \end{enumerate}
  \item Aggregate to the trajectory level and average across folds:
    $\hat{\theta}_{ij}^{\mathrm{deb}} = n^{-1} \sum_{k} \sum_{c \in I_k} (T_c - 1)^{-1} \sum_{t=1}^{T_c - 1} \hat{\psi}_{ij,c,t}$, $n = C$.
  \item Set $\hat{J}^{\mathrm{deb}}[i,j] = \hat{\theta}_{ij}^{\mathrm{deb}}$; estimate the trajectory-level score variance and report standard errors, $100(1-\tau)\%$ CIs, and Wald tests for $H_0\!: J_0[i,j] = 0$.
\end{enumerate}
\smallskip\hrule
\end{minipage}
\end{figure}

Two implementation details deserve emphasis.  The first is the role of cross-fitting, which appears in step 2 of Algorithm~1.  If the same trajectories were used both to estimate the nuisance functions $(\hat{m}, \hat{\alpha})$ and to evaluate the score, the score would inherit a self-influence bias: any way in which $\hat{m}$ overfits a particular trajectory would also contaminate the residual computed on that trajectory, and the bias correction would be evaluated on the very residuals it was tuned to fit \citep{chernozhukov2018double}.  Cross-fitting eliminates this dependence by partitioning the cascades $c = 1, \ldots, C$ into $K$ folds, fitting $\hat{m}^{(-k)}$ and $\hat{\alpha}^{(-k)}$ on the $K - 1$ training folds, and evaluating the score only on the held-out fold $I_k$; averaging the resulting per-fold scores reproduces the trajectory-level CLT used in Theorem~\ref{thm:rate_JK}.  The partition is performed at the trajectory level rather than the observation level, which preserves within-trajectory dependence and matches the asymptotic regime where the number of trajectories $n = C$ grows.  The second detail is selection of the regularization parameter $\lambda$ in \eqref{eq:training}.  In our experiments $\lambda$ is chosen by held-out cross-validation on a fold of trajectories disjoint from the cross-fitting folds used to evaluate the score, so that the data used to tune $\lambda$ play no role in the inference step.  In practice this is implemented as an outer hyperparameter-tuning split nested inside each cross-fitting fold; the same Riesz-loss validation is used to tune $\hat{\alpha}_{ij}$ when it is estimated numerically.  Both choices are standard in the debiased-ML literature \citep{chernozhukov2018double, chernozhukov2022automatic} and are necessary for the asymptotic guarantees in Theorem~\ref{thm:rate_JK} to apply.

The orthogonal score delivers valid edge-level inference under standard regularity conditions.  For the asymptotic theory, we treat each cascade trajectory as an independent sampling unit and let the number of trajectories $n$ grow; cross-fitting is performed at the trajectory level.  For each trajectory $c$, define the trajectory-level score
\[
  \phi_{ij}(W_c)
    = \frac{1}{T_c - 1} \sum_{t=1}^{T_c - 1}
      \psiij\bigl(Z_{c,t}, Y_{c,t+1};\, \thetaij^0, \m, \alphaij\bigr),
\]
where $W_c = \{(Y_{c,t}, X_{c,t}, Y_{c,t+1})\}_{t=1}^{T_c - 1}$.

\begin{theorem}[Asymptotic Normality of the Debiased Jacobian]
  \label{thm:rate_JK}
  Under the regularity conditions listed in Assumptions~\ref{ass:iid}--\ref{ass:variance} in Appendix~\ref{app:assumptions}, for each fixed pair $(i, j)$,
  \begin{equation}
    \label{eq:clt_theta}
    \sqrt{n}\,\bigl(\hat{\theta}_{ij}^{\mathrm{deb}} - \thetaij^0\bigr)
      \xrightarrow{d} \mathcal{N}(0,\, V_{ij}),
    \qquad
    V_{ij} = \E[\phi_{ij}(W_c)^2].
  \end{equation}
  Moreover, the following cross-fitted sample analogue $\hat{V}_{ij}$ is a consistent estimator of $V_{ij}$:
  \begin{equation}
    \label{eq:vhat}
    \hat{V}_{ij}
      = \frac{1}{n} \sum_{k=1}^{K} \sum_{c \in I_k} \hat{\phi}_{ij}^{(k)}(W_c)^2,
    \qquad
    \hat{\phi}_{ij}^{(k)}(W_c)
      = \frac{1}{T_c - 1} \sum_{t=1}^{T_c - 1}
        \psiij\bigl(Z_{c,t}, Y_{c,t+1};\, \hattheta, \hat{m}^{(-k)}, \hat{\alpha}_{ij}^{(-k)}\bigr),
  \end{equation}
  where $I_k$ are the cross-fitting folds and $(\hat{m}^{(-k)}, \hat{\alpha}_{ij}^{(-k)})$ are the nuisance estimates fitted without fold $k$.
\end{theorem}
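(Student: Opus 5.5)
The plan is the standard cross-fitted DML argument of \citet{chernozhukov2018double}, carried out at the trajectory level, with the Riesz representer of Definition~\ref{def:riesz} supplying Neyman orthogonality. I will write $\hat\theta^{\mathrm{deb}}_{ij}-\thetaij^0 = n^{-1}\sum_{k}\sum_{c\in I_k}\hat\phi^{(k)}_{ij}(W_c)$, where $\hat\phi^{(k)}_{ij}$ is evaluated at $\thetaij^0$ and the nuisances are $(\hat m^{(-k)},\hat\alpha^{(-k)}_{ij})$. Subtracting $\phi_{ij}(W_c)$ gives
\[
\sqrt{n}\bigl(\hat\theta^{\mathrm{deb}}_{ij}-\thetaij^0\bigr)=\frac{1}{\sqrt n}\sum_{c=1}^n\phi_{ij}(W_c)+\sum_{k=1}^K\sqrt{n}\,\bigl(R_{1k}+R_{2k}\bigr).
\]
The leading term converges to $\mathcal N(0,V_{ij})$ by the Lindeberg--L\'evy CLT. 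This uses that trajectories are i.i.d.\ (Assumption~\ref{ass:iid}) and that $V_{ij}<\infty$ (Assumption~\ref{ass:variance}). It also uses $\E[\phi_{ij}(W_c)]=0$, which holds because $\E[\ve_{t+1}\mid \Zt]=0$ and $\thetaij^0$ is the (time-averaged) mean derivative. Stationarity, or a per-$t$ definition of the target, is needed so that averaging over $t$ reproduces $\thetaij^0$. The work is therefore to show that both remainders are $o_p(n^{-1/2})$.

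\textbf{Empirical process term.} Define
\[
R_{1k}=(\mathbb P_{n,k}-\mathbb P)\bigl[\hat\phi^{(k)}_{ij}-\phi_{ij}\bigr],
\]
where $\mathbb P_{n,k}$ is the empirical average over $c\in I_k$, scaled by $|I_k|/n$. Conditional on the training folds, the nuisances are fixed, and the summands over $c\in I_k$ are i.i.d.\ and mean zero. Chebyshev then bounds $R_{1k}$ by $n^{-1/2}\|\hat\phi^{(k)}_{ij}-\phi_{ij}\|_{L^2}$. That norm is controlled by three quantities: $\|\partial_j\hat m_i-\partial_j m_{0,i}\|_{L^2}$, $\|\hat\alpha_{ij}-\alpha_{ij}\|_{L^2}$ (using bounded conditional variance of $\ve$), and $\|\hat\alpha_{ij}\|_\infty\|\hat m_i-m_{0,i}\|_{L^2}$. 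Each tends to zero in probability under the consistency assumptions. Hence $R_{1k}=o_p(n^{-1/2})$, and no Donsker condition is needed because of cross-fitting.

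\textbf{Bias term.} This is the main obstacle. Define $R_{2k}=\mathbb P[\hat\phi^{(k)}_{ij}-\phi_{ij}]$ and compute it exactly. Take the conditional expectation of the residual given $\Zt$, replacing $Y_{t+1,i}$ by $m_{0,i}$, and apply Definition~\ref{def:riesz} to $h=\hat m_i-m_{0,i}\in\mathcal H$. The result is
\[
R_{2k}=\E\bigl[\partial_j(\hat m_i-m_{0,i})\bigr]+\E\bigl[\hat\alpha_{ij}(m_{0,i}-\hat m_i)\bigr]=-\E\bigl[(\hat\alpha_{ij}-\alpha_{ij})(\hat m_i-m_{0,i})\bigr].
\]
By Cauchy--Schwarz, $|R_{2k}|\le\|\hat\alpha_{ij}-\alpha_{ij}\|_{L^2}\|\hat m_i-m_{0,i}\|_{L^2}$. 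The product-rate assumption, typically each nuisance being $o_p(n^{-1/4})$, makes this $o_p(n^{-1/2})$. The delicate points are two. First, $\hat m^{(-k)}-m_0$ must lie in the class $\mathcal H$ where the Riesz identity holds, which requires smoothness of $\mathcal M$ and boundary or density conditions for the integration by parts. Second, the expectation must be taken under the stationary (or time-averaged) law of $\Zt$ that defines $\alpha_{ij}$. For binary $Y$ the derivative is interpreted through the smooth extension of $m$, and I would state that convention explicitly. This step also formalises Neyman orthogonality (Appendix~\ref{pf:neyman}): the first-order terms cancel and only the cross product survives.

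\textbf{Variance estimator.} Expand $\hat\phi^{(k)}_{ij}$ evaluated at $\hattheta$ as $\phi_{ij}+(\hat\phi^{(k)}_{ij}|_{\thetaij^0}-\phi_{ij})-(\hattheta-\thetaij^0)$. Then $n^{-1}\sum_c\phi_{ij}^2\to V_{ij}$ by the LLN. The squared perturbation terms vanish in mean: the first by the $L^2$ bound established for $R_{1k}$, conditional on the training folds, and the second because $\hattheta-\thetaij^0=O_p(n^{-1/2})$. The cross terms then vanish by Cauchy--Schwarz, giving $\hat V_{ij}\xrightarrow{p}V_{ij}$. Combining this with Slutsky yields the Wald-based confidence intervals.
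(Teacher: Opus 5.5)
Your proposal is correct and is essentially the paper's own argument. The paper likewise splits $\sqrt{n}(\hat\theta_{ij}^{\mathrm{deb}}-\theta_{ij}^0)$ into the oracle sum plus a remainder, uses the Riesz identity and $\E[\ve\mid Z]=0$ to cancel the first-order terms, and bounds the surviving cross term $-\E[(\hat\alpha_{ij}-\alpha_{ij})(\hat m_i-m_{0,i})]$ by Cauchy--Schwarz and the product rate. It only organizes the remainder by nuisance ($A_c,B_c,C_c,D_c$) rather than into your centered-fluctuation and conditional-bias pieces.

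Your version is more complete, since the paper asserts without argument that the centered fluctuations are negligible and never proves $\hat V_{ij}\xrightarrow{p}V_{ij}$. However, your conditional-Chebyshev bound uses $\sup_z\E[\ve^2\mid Z=z]<\infty$ and $\|\hat\alpha_{ij}\|_\infty=O_p(1)$, which are not among Assumptions~\ref{ass:iid}--\ref{ass:variance}, so you should state them explicitly as added regularity conditions (the paper's Step~5 implicitly needs the same).
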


Theorem~\ref{thm:rate_JK} is a direct application of the debiased machine learning template of \citet{chernozhukov2018double} and the Riesz-representer framework of \citet{chernozhukov2022automatic} and \citet{hirshberg2021augmented}; our contribution is not the underlying inference machinery, which is by now well established in the econometrics and statistics literatures, but the specific identification of the network Jacobian $J_0$ as a smooth functional of the transition regression and the verification that the resulting orthogonal score is well behaved under repeated sampling of cascade trajectories.  The theorem implies that each entry of the debiased Jacobian admits asymptotically valid standard errors and confidence intervals, even though the underlying transition function $m_0$ is estimated by flexible machine learning.

Theorem~\ref{thm:rate_JK} yields immediate edge-level inference.  Let $\hat{V}_{ij}$ be the variance estimator in \eqref{eq:vhat}.  Then the standard error of $\hat{\theta}_{ij}^{\mathrm{deb}}$ is $\widehat{\mathrm{se}}(\hat{\theta}_{ij}^{\mathrm{deb}}) = \sqrt{\hat{V}_{ij}/n}$, an asymptotically valid $100(1 - \tau)\%$ confidence interval for the edge $(j \to i)$ is $\hat{\theta}_{ij}^{\mathrm{deb}} \pm z_{1 - \tau/2}\, \widehat{\mathrm{se}}(\hat{\theta}_{ij}^{\mathrm{deb}})$, and a test of $H_0\!: J_0[i,j] = 0$ is based on the Wald statistic $\hat{\theta}_{ij}^{\mathrm{deb}} / \widehat{\mathrm{se}}(\hat{\theta}_{ij}^{\mathrm{deb}})$.  These inferential objects are useful in several ways.  First, they allow the researcher to distinguish statistically meaningful edges from noise, rather than relying only on point estimates or heuristic rankings.  Second, they allow uncertainty-aware edge ranking, which is especially important when the recovered network will be used for targeting, intervention design, or downstream policy analysis.  Third, they make it possible to communicate uncertainty transparently in applications where decisions based on inferred influence links are consequential.

Theorem~\ref{thm:rate_JK} is stated under correct specification of the transition function, meaning that $m_0$ belongs to the model class $\mathcal{M}$ used to estimate $\hat{m}$.  When the model is misspecified, the same debiasing argument delivers asymptotic normality around the average Jacobian of the pseudo-true regression function, that is, the best approximation to $m_0$ within the chosen class, rather than around the true $J_0$.  The method therefore remains well behaved under misspecification, but inherits an approximation bias whose magnitude depends on the expressive power of the chosen model class.  Appendix~\ref{app:misspec} provides the formal argument.

%% ════════════════════════════════════════════════════════════════════════════
\section{Validation}
\label{sec:validation}

We validate CascadeNet on both synthetic and real data.  The synthetic exercise allows us to assess recovery performance under controlled conditions where the data-generating process is known and the true average Jacobian can be computed.  The empirical application assesses whether the recovered network aligns with an external benchmark in a real-world epidemiological setting.

\subsection{Synthetic Validation}
\label{sec:simulation}

\subsubsection{Simulation}

We evaluate CascadeNet on synthetic cascades generated from a known network under a range of diffusion mechanisms.  The goal is to test whether the method can recover the DGP-implied influence structure when the diffusion model is correctly specified, mildly misspecified, or strongly misspecified relative to classical baselines.

We generate a directed Erd\H{o}s--R\'{e}nyi network with $N = 64$ nodes and edge probability $p = 0.05$, yielding on average approximately 195 directed edges.  Edge weights are drawn independently from $[0.2, 0.8]$.

We pair this network with nine data-generating processes that span three broad families.  The first family, pairwise transmission, captures settings in which influence flows one edge at a time: each active node attempts to activate each of its neighbors, either independently (Independent Cascade) or once a weighted sum crosses a node-specific threshold (Linear Threshold).  The second family, continuous-time SI, models infection as the result of waiting-time hazards along edges, with Exponential-SI and Power-law-SI corresponding to different shapes of that hazard.  The third family, aggregate-effect and nonlinear dynamics, departs from clean pairwise propagation: SIS and SIR allow recovery and (for SIR) absorbing immunity, Complex Contagion requires multiple simultaneously active neighbors before a node activates, Hawkes generates self-exciting bursts whose intensity depends on past events, and a flexible Nonlinear DGP combines polynomial, oscillatory, and threshold-like features.  Together these nine DGPs span absorbing versus non-absorbing dynamics, pairwise versus aggregate transmission, and both well-specified and strongly misspecified environments for the competing methods.  We rely on the classical formulations developed in \citet{kempe2003maximizing}, \citet{gomezrodriguez2011uncovering}, \citet{kermack1927contribution}, \citet{centola2007complex}, and \citet{hawkes1971spectra}; the explicit one-step update equation and parameter choices for each DGP are given in Appendix~\ref{app:dgp}.

For each DGP, we generate $C_{\mathrm{train}} = 1{,}000$ training cascades of length $T = 10$--$30$, each initialized from two randomly chosen seed nodes.  We evaluate each method on an independent set of $C_{\mathrm{test}} = 1{,}000$ cascades.  To compute the true population target $J_0$, we use a separate held-out sample of $C_{\mathrm{eval}} = 5{,}000$ cascades.

We compare six approaches.  Pairwise is a simple temporal-precedence heuristic that scores edge $j \to i$ by the fraction of cascades in which node $j$ becomes active before node $i$ \citep[in the spirit of][]{gomezrodriguez2010inferring}.  NetInf \citep{gomezrodriguez2010inferring} performs likelihood-based network inference under an exponential independent-cascade kernel.  NETRATE \citep{gomezrodriguez2011uncovering} estimates pairwise transmission rates from first-infection timestamps by convex optimization.  DANI \citep{ramezani2024dani} estimates pairwise transmission probabilities from the observed panel using a Markov-transition formulation.  LTMLE is longitudinal targeted minimum-loss estimation \citep{vanderlaan2012ltmle} adapted to panel cascade data.  CascadeNet is our method with sigmoid link, $\lambda = 10^{-4}$, 300 training epochs, and 2-fold Riesz debiasing.

Because the estimand in Section~\ref{sec:tasks} is the average Jacobian rather than the binary adjacency matrix, our primary simulation benchmark compares estimated edge scores to the true DGP-implied mean Jacobian
\[
  J_0[i,j] = \E\!\left[\frac{\partial m_{0,i}}{\partial y_j}(Y_t, X_t)\right].
\]
Different diffusion models can induce very different marginal influence magnitudes even on the same underlying graph: an edge that is important under IC need not have the same average effect under Hawkes or SIS dynamics.  For that reason, adjacency recovery alone is not the right evaluation target for our estimator.

We compute the true $J_0$ numerically from each DGP's conditional expectation operator, using finite differences evaluated over the independent held-out sample of $5{,}000$ cascades.  Our primary metric is the Pearson correlation between each method's estimated edge-score vector and $|J_0|$, using off-diagonal entries only.  We focus on $|J_0|$ because the network-recovery task is primarily about ranking influence strength, and in these simulations the economically relevant comparison is the magnitude of marginal influence rather than its sign.  As a secondary metric, we report precision-recall performance for detecting the support of the underlying adjacency matrix $W$.  This secondary benchmark is useful for comparing topology recovery, but our primary criterion remains alignment with the true average Jacobian, since that is the estimand delivered by CascadeNet.

Table~\ref{tab:task1} reports the Pearson correlation between estimated edge scores and the true mean Jacobian $|J_0|$ across all nine diffusion models.  CascadeNet achieves the highest correlation in every setting.

\begin{table}[ht]
\centering
\caption{Network Recovery: Pearson $r$ between estimated scores and
  true Jacobian $|J_0|$ ($C = 1{,}000$ training cascades, ER network
  with $N = 64$, $|E| = 195$).  Bold indicates best method per row.
  $\Delta$ is the improvement of CascadeNet over the best baseline;
  stars denote significance via Fisher $z$-test:
  {*}{*}{*}\,$p < 0.001$.}
\label{tab:task1}
\small
\begin{tabular}{l ccccc c c}
\toprule
 & \multicolumn{5}{c}{Baselines} & CascadeNet & \\
\cmidrule(lr){2-6} \cmidrule(lr){7-7}
DGP & Pairwise & NetInf & NETRATE & DANI & LTMLE & Ours & $\Delta$ \\
\midrule
IC        & 0.32 & 0.65 & 0.64 & 0.47 & 0.66 & \textbf{0.84} & $+$0.18$^{***}$ \\
LT        & 0.45 & 0.70 & 0.63 & 0.58 & 0.68 & \textbf{0.78} & $+$0.08$^{***}$ \\
Exp-SI    & 0.37 & 0.50 & 0.31 & 0.55 & 0.73 & \textbf{0.80} & $+$0.07$^{***}$ \\
PL-SI     & 0.43 & 0.32 & 0.36 & 0.60 & 0.77 & \textbf{0.81} & $+$0.04$^{***}$ \\
SIS       & 0.10 & 0.58 & 0.44 & 0.26 & 0.29 & \textbf{0.86} & $+$0.28$^{***}$ \\
SIR       & 0.14 & 0.51 & 0.50 & 0.49 & 0.64 & \textbf{0.84} & $+$0.21$^{***}$ \\
Complex   & 0.41 & 0.57 & 0.33 & 0.59 & 0.70 & \textbf{0.80} & $+$0.10$^{***}$ \\
Hawkes    & $-$0.04 & 0.32 & 0.30 & 0.19 & 0.20 & \textbf{0.82} & $+$0.50$^{***}$ \\
Nonlinear & 0.23 & 0.27 & 0.25 & 0.25 & 0.46 & \textbf{0.54} & $+$0.08$^{***}$ \\
\bottomrule
\end{tabular}
\end{table}

Two patterns stand out.  First, the classical baselines are model-sensitive: each tends to perform well on the family most closely aligned with its maintained assumptions but degrades sharply outside that family.  NetInf performs strongly on LT but much worse on Hawkes; LTMLE performs well on the continuous-time SI models but deteriorates on Hawkes and the flexible nonlinear DGP; Pairwise performs especially poorly on epidemic models with recovery/removal, such as SIS and SIR.  Second, CascadeNet is uniformly strong: its correlation never falls below $0.54$, and it delivers the best edge ranking in every environment considered.

Figure~\ref{fig:pearson_bars} visualizes these comparisons.  We also report precision-recall curves for adjacency recovery in Figure~\ref{fig:pr_curves} (Appendix~\ref{app:pr_curves}).  CascadeNet maintains substantially higher precision at moderate-to-high recall on the aggregate-effect and nonlinear models, while remaining competitive with the strongest classical baselines on pairwise-transmission DGPs where those methods are relatively well specified.

\begin{figure}[ht]
  \centering
  \includegraphics[width=\textwidth]{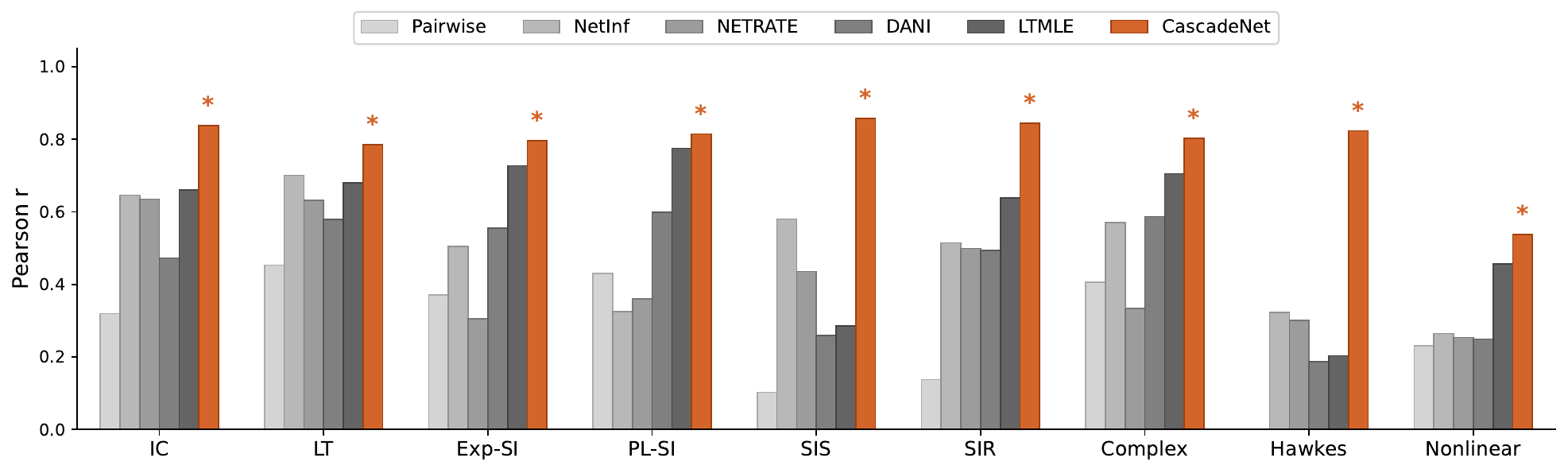}
  \caption{Pearson correlation between estimated scores and true
    Jacobian $|J_0|$ across nine diffusion models ($C = 1{,}000$,
    ER network).  CascadeNet achieves the highest correlation in all
    nine settings.  Classical baselines are inconsistent: each excels
    on its ``native'' DGP family but fails on others.}
  \label{fig:pearson_bars}
\end{figure}

Taken together, the simulation results support the main claim of the paper: when the true diffusion mechanism is unknown, a flexible Jacobian-based estimator with orthogonal debiasing provides a more robust recovery strategy than methods built around a single parametric propagation model.

\subsection{Empirical Application: COVID-19 Transmission in Spain}
\label{sec:covid}

We now apply CascadeNet to a real-world epidemiological setting: recovering inter-province transmission structure from daily COVID-19 case counts in Spain.

During the COVID-19 pandemic, a central policy problem was to understand which regions were transmitting infection to which other regions.  This information was relevant for targeted interventions such as regional travel restrictions, testing deployment, and surveillance prioritization.  While the transmission network itself is not observed, inter-province mobility flows provide a useful external benchmark for the plausibility of the recovered edge rankings.  Mobility is not a literal ground truth for transmission, because behavioral adaptation, public-health policy, and local conditions all affect realized contagion, but stronger alignment with mobility is suggestive evidence that the inferred network is capturing meaningful transmission opportunities.

We use the dataset of \citet{murphy2021deep}, which combines daily COVID-19 case counts from Spain's Centro Nacional de Epidemiolog\'{i}a with inter-province mobility flows from the Ministerio de Fomento.  The case panel records daily new infections for $N = 52$ provinces over $T = 450$ days, from January 2020 to March 2021.  Following \citet{murphy2021deep}, we normalize each province's incidence series by its mean daily incidence to improve comparability across provinces of different sizes.

To construct multiple panels, we use sliding windows of 56 days with stride 28, yielding $C = 15$ partially overlapping trajectory windows.  Because these windows overlap, the formal independent-trajectory asymptotic theory of Section~\ref{sec:tasks} should be viewed here as suggestive rather than exact.  We therefore interpret this application primarily as an external validation exercise for the recovered edge rankings, rather than as a literal implementation of the asymptotic sampling regime used in the theorem.

As an external benchmark, we construct a thresholded mobility network from inter-province traveler flows.  Following \citet{murphy2021deep}, we log-transform the flows, symmetrize them, and threshold the resulting matrix to an average degree of approximately 10, yielding $|E| = 519$ undirected benchmark edges.  In the evaluation below, we compare off-diagonal estimated edge scores to this benchmark.

Table~\ref{tab:task2} reports empirical performance.  We evaluate whether estimated edge rankings align with the benchmark mobility network using the break-even point (BEP) of the precision-recall curve, AUC-PR, and the Pearson correlation between estimated edge scores and log mobility flow.

\begin{table}[ht]
\centering
\caption{Network Recovery on Spain COVID-19 Data ($N = 52$ Provinces,
  $|E| = 519$ Mobility Edges).  Pearson $r$ measures correlation
  between estimated edge scores and log inter-province mobility flow.
  Bold indicates best method.}
\label{tab:task2}
\small
\begin{tabular}{l cc cc}
\toprule
 & & & \multicolumn{2}{c}{Pearson $r$} \\
\cmidrule(lr){4-5}
Method & BEP & AUC-PR & $r$ & $p$-value \\
\midrule
Pairwise    & 0.194 & 0.202 & $-0.012$ & $0.53$ \\
Granger     & 0.236 & 0.209 & $+0.025$ & $0.20$ \\
NetInf      & 0.174 & 0.186 & $+0.004$ & $0.83$ \\
NETRATE     & 0.190 & 0.188 & $+0.024$ & $0.22$ \\
DANI        & 0.176 & 0.196 & $-0.011$ & $0.58$ \\
LTMLE       & 0.176 & 0.195 & $-0.012$ & $0.52$ \\
CascadeNet  & \textbf{0.298} & \textbf{0.263} & $\mathbf{+0.139}$ & $< 10^{-12}$ \\
\bottomrule
\end{tabular}
\end{table}

CascadeNet achieves the highest BEP and AUC-PR among all methods.  The benchmark-network density is $0.196$, which is the random baseline for precision-recall evaluation; most baselines lie only slightly above that level, whereas CascadeNet improves on it substantially.  On the more demanding continuous benchmark, CascadeNet is also the only method whose estimated edge scores correlate significantly with log mobility flows.  The baseline correlations are all close to zero.

This result is notable because CascadeNet is trained only on case-count dynamics and never observes mobility data during estimation.  The positive alignment with mobility therefore emerges from the recovered transmission structure itself, rather than from any direct use of the benchmark.

The empirical benchmark is intentionally demanding.  Mobility is only an imperfect proxy for transmission: lockdowns, masking, behavioral responses, underreporting, and within-province spread all weaken the mapping from traveler flows to realized infections.  The relevant question is therefore not whether a method perfectly reconstructs the benchmark network, but whether it extracts any meaningful cross-region transmission signal from noisy epidemic data.  By that standard, CascadeNet appears to succeed where the classical baselines do not.  Its moderate but clearly positive alignment with mobility suggests that the flexible, Jacobian-based approach is recovering transmission pathways that are plausibly related to real inter-provincial contact intensity, even in an environment where the true contagion mechanism is far more complex than the kernels assumed by standard cascade-inference methods.

%% ════════════════════════════════════════════════════════════════════════════
\section{Managerial Implications and Conclusion}
\label{sec:managerial}
\label{sec:conclusion}

This paper develops CascadeNet, a flexible framework for network recovery from cascade data.  The method is built on a simple but general insight: under a Markovian diffusion process, the Jacobian of the one-step transition function summarizes how one agent's current state affects another agent's next-period outcome.  By estimating this transition function flexibly and then applying Neyman-orthogonal debiasing via the Riesz representer, CascadeNet recovers an average influence network while correcting the regularization bias that would otherwise distort the naive plug-in Jacobian.  The result is a network estimator that combines flexibility with formal statistical inference on individual edges.

The managerial implications are substantial.  In many practical settings, the analyst does not know how influence propagates: a marketing manager may not know whether consumers respond to pairwise persuasion, peer reinforcement, or threshold behavior, and a public-health authority faces transmission dynamics shaped by mobility, policies, and behavioral adaptation that no single parametric model captures well.  Our results show that this uncertainty matters: methods tailored to one propagation model can deteriorate sharply when the assumed kernel is wrong, whereas CascadeNet learns the transition function directly from the data without requiring the analyst to commit ex ante to a specific diffusion mechanism.  This robustness is particularly valuable because many downstream decisions, such as seed selection in marketing \citep{kempe2003maximizing}, travel restrictions and testing allocation in epidemiology, and contagion monitoring in financial settings, depend on identifying which links in the network are most consequential.  In all of these applications, ranking edges by estimated importance is useful, but ranking them with confidence intervals is substantially more valuable.

The validation results support these conclusions.  Across nine synthetic diffusion models spanning pairwise, epidemic, aggregate-effect, and nonlinear environments, CascadeNet achieves the highest recovery accuracy in every setting, and in the Spain COVID-19 application it is the only method whose recovered edge scores align significantly with inter-province mobility flows, even though mobility data are never used in estimation.  The debiasing step is central to this performance: when cascade data are sparse and regularization is strong, the Riesz-based correction is often the difference between recovering a useful network signal and recovering almost none at all (Appendix~\ref{app:debiasing_sim}).

Several extensions remain for future work, including adaptive topology learning when the adjacency structure is fully unknown, uniform inference for many edges simultaneously, continuous-time diffusion with flexible kernels, and inference under single-cascade or dependent-trajectory sampling.  Finally, predicting long-run or steady-state outcomes from the estimated transition function is a natural next step; we pursue this direction in companion work on equilibrium learning, where the core challenge is valid inference on the fixed point of a flexibly estimated map.

More broadly, the paper's message is that network recovery need not force a tradeoff between flexibility and inferential rigor.  By combining modern machine learning with orthogonal debiasing, CascadeNet makes recovered networks not just descriptive objects, but reliable inputs into managerial and policy analysis.

%% ════════════════════════════════════════════════════════════════════════════
\bibliographystyle{apalike}

%% ════════════════════════════════════════════════════════════════════════════
\appendix
\renewcommand{\thesection}{\Alph{section}}
\newcommand{\appsec}[2]{%
  \refstepcounter{section}%
  \label{#2}%
  \section*{Appendix~\thesection: #1}%
  \addcontentsline{toc}{section}{Appendix~\thesection: #1}%
}

\appsec{Assumptions}{app:assumptions}

The assumptions below adapt the standard regularity conditions of the debiased machine learning literature, in particular \citet{chernozhukov2018double} and the Riesz-representer framework of \citet{chernozhukov2022automatic} and \citet{hirshberg2021augmented}, to the panel structure of cascade data.  None of these conditions are new to this paper; they are imposed verbatim or with minor modifications relative to those references.  For the asymptotic theory, we treat each cascade trajectory as an independent sampling unit.  Let
\[
  W_c = \{(Y_{c,t}, X_{c,t}, Y_{c,t+1})\}_{t=1}^{T_c - 1},
  \qquad c = 1, \ldots, n,
\]
denote the $c$-th observed trajectory, where $n = C$ is the number of cascades, and define $Z_{c,t} := (Y_{c,t}, X_{c,t})$.  For each fixed pair $(i, j)$, recall that the target parameter is
\[
  \thetaij^0
    := \E\!\left[\frac{\partial m_{0,i}}{\partial y_j}(Y_t, X_t)\right].
\]
We impose the following assumptions.

\begin{assumption}[Independent trajectories]
  \label{ass:iid}
  The trajectories $W_1, \ldots, W_n$ are independent and identically distributed.  Their lengths satisfy $2 \le T_c \le \bar{T} < \infty$ for all $c$, for some fixed constant $\bar{T}$.
\end{assumption}

\begin{assumption}[Smoothness and bounded moments]
  \label{ass:smooth}
  For each coordinate $i$, the regression function $m_{0,i}(z)$ is continuously differentiable in $Y$.  For each pair $(i, j)$, $\partial m_{0,i}/\partial y_j(Z_{c,t}) \in L^2(P_Z)$ and $\E\bigl[(\partial m_{0,i}/\partial y_j(Z_{c,t}))^2\bigr] < \infty$.  Moreover, the regression residual $\ve_{c,t+1,i} := Y_{c,t+1,i} - m_{0,i}(Z_{c,t})$ satisfies $\E[\ve_{c,t+1,i} \mid Z_{c,t}] = 0$ and $\E[\ve_{c,t+1,i}^2] < \infty$.
\end{assumption}

\begin{assumption}[Riesz existence and boundedness]
  \label{ass:riesz}
  For each fixed pair $(i, j)$, the linear functional $h \mapsto \E[\partial h(\Zt)/\partial y_j]$ is bounded on the function class $\mathcal{H}$ in Definition~\ref{def:riesz}.  Hence there exists a unique Riesz representer $\alphaij \in L^2(P_Z)$ such that $\E[\alphaij(\Zt)\, h(\Zt)] = \E[\partial h(\Zt)/\partial y_j]$ for all $h \in \mathcal{H}$, and $\E[\alphaij(\Zt)^2] < \infty$.
\end{assumption}

\begin{assumption}[Cross-fitted nuisance rates]
  \label{ass:rates}
  Let $\hat{m}^{(-k)}$ and $\hat{\alpha}_{ij}^{(-k)}$ denote the nuisance estimators fit on the complement of fold $I_k$.  Then for each fold $k$,
  \[
    \|\hat{m}_i^{(-k)} - m_{0,i}\|_{L^2(P_Z)} = o_p(n^{-1/4}),
    \qquad
    \|\hat{\alpha}_{ij}^{(-k)} - \alphaij\|_{L^2(P_Z)} = o_p(n^{-1/4}),
  \]
  and
  \[
    \left\|\frac{\partial \hat{m}_i^{(-k)}}{\partial y_j} - \frac{\partial m_{0,i}}{\partial y_j}\right\|_{L^2(P_Z)} = o_p(1).
  \]
\end{assumption}

\begin{assumption}[Finite trajectory-level score variance]
  \label{ass:variance}
  For each fixed pair $(i, j)$, define the oracle trajectory-level score
  \[
    \phi_{ij}(W_c)
      = \frac{1}{T_c - 1} \sum_{t=1}^{T_c - 1}
        \psiij\!\bigl(Z_{c,t}, Y_{c,t+1};\, \thetaij^0, \m, \alphaij\bigr),
  \]
  where $\psiij$ is the score in \eqref{eq:score}.  Then $\E[\phi_{ij}(W_c)] = 0$ and $0 < V_{ij} := \E[\phi_{ij}(W_c)^2] < \infty$.
\end{assumption}

Assumptions~\ref{ass:iid}--\ref{ass:variance} are standard in debiased machine learning \citep{chernozhukov2018double, chernozhukov2022automatic, hirshberg2021augmented}, adapted here to repeated sampling of cascade trajectories.  Assumption~\ref{ass:rates} in particular is the now-canonical $o_p(n^{-1/4})$ product-rate condition introduced by \citet{chernozhukov2018double}, which ensures that the product of nuisance errors is $o_p(n^{-1/2})$, which is sufficient for valid debiasing.  We do not view these assumptions as a contribution of this paper; the contribution is the application of this established framework to a setting (the network Jacobian recovered from cascade data) where it has not previously been deployed.

\appsec{Diffusion-Model Specifications for the Synthetic Validation}{app:dgp}

This appendix describes the nine data-generating processes used in Section~\ref{sec:simulation}.  Throughout, $W \in [0,1]^{N \times N}$ denotes the directed weighted adjacency matrix described in Section~\ref{sec:simulation} (with $W_{ji}$ the weight on the directed edge $j \to i$), $Y_{c,t} \in \{0,1\}^N$ denotes the activation state of the $N$ agents at time $t$ in cascade $c$, and $\sigma(\cdot)$ denotes the logistic sigmoid.  All cascades start from two randomly chosen seed nodes.  We summarize the one-step transition that defines each DGP.

\paragraph{Independent Cascade (IC) \citep{kempe2003maximizing}.}
Adoption is irreversible.  At each step, every newly active node $j$ at time $t$ makes a single attempt to activate each currently inactive neighbor $i$, succeeding independently with probability $W_{ji}$:
\[
  \Pr(Y_{c,t+1,i} = 1 \mid Y_{c,t})
    = 1 - \prod_{j \in N_{\mathrm{new}}(t)}(1 - W_{ji}),
  \qquad
  Y_{c,t+1,i} \ge Y_{c,t,i}.
\]
We set diagonal entries of $W$ to zero so that nodes do not self-activate.

\paragraph{Linear Threshold (LT) \citep{kempe2003maximizing}.}
Each node $i$ has a threshold $\theta_i$ drawn uniformly from $(0, 1]$.  Adoption is irreversible.  At time $t+1$,
\[
  Y_{c,t+1,i}
    = \mathbf{1}\!\left\{\sum_{j} W_{ji}\, Y_{c,t,j} \ge \theta_i\right\}
      \;\vee\; Y_{c,t,i}.
\]
We rescale rows of $W$ so that $\sum_j W_{ji} \le 1$ for each $i$, ensuring well-defined thresholds.

\paragraph{Exponential-SI \citep{gomezrodriguez2011uncovering}.}
Continuous-time SI with exponential transmission times.  The waiting time for an active $j$ to infect an inactive $i$ is exponential with rate $\lambda_{ji} = W_{ji}$.  We discretize time with unit step $\Delta t = 1$ and obtain the one-step activation probability
\[
  \Pr(Y_{c,t+1,i} = 1 \mid Y_{c,t})
    = 1 - \prod_{j: Y_{c,t,j} = 1}\!\exp\bigl(-W_{ji}\, \Delta t\bigr),
  \qquad
  Y_{c,t+1,i} \ge Y_{c,t,i}.
\]

\paragraph{Power-law-SI \citep{gomezrodriguez2011uncovering}.}
Same as Exponential-SI but with a power-law transmission hazard $h_{ji}(\tau) = (\alpha - 1)\, \tau^{-\alpha}\, W_{ji}$ for elapsed time $\tau$ since $j$'s infection.  We use $\alpha = 2.0$ and discretize as in Exponential-SI.

\paragraph{SIS \citep{kermack1927contribution}.}
Susceptible--Infected--Susceptible dynamics with infection from active neighbors and recovery rate $\gamma$:
\[
  \Pr(Y_{c,t+1,i} = 1 \mid Y_{c,t})
    = (1 - Y_{c,t,i})\, \Bigl[1 - \prod_{j}(1 - W_{ji}\, Y_{c,t,j})\Bigr]
      + Y_{c,t,i}\,(1 - \gamma).
\]
We set $\gamma = 0.3$.  Adoption is reversible (non-absorbing).

\paragraph{SIR \citep{kermack1927contribution}.}
Susceptible--Infected--Recovered dynamics.  Infected nodes recover (and become permanently immune) with rate $\gamma$:
\[
  Y_{c,t+1,i}
    = \begin{cases}
        \mathbf{1}\{\text{at least one active neighbor infects } i\} & \text{if } Y_{c,t,i} = 0 \text{ and } i \text{ is susceptible},\\
        Y_{c,t,i} & \text{otherwise (including recovered)},
      \end{cases}
\]
where the infection probability is $1 - \prod_j (1 - W_{ji}\, Y_{c,t,j})$ and recovery occurs independently each step with probability $\gamma = 0.15$.

\paragraph{Complex Contagion \citep{centola2007complex}.}
A node activates only when at least $k$ of its weighted neighbors are simultaneously active.  Adoption is irreversible.  We set $k = 4$:
\[
  Y_{c,t+1,i}
    = \mathbf{1}\!\left\{\sum_{j} \mathbf{1}\{W_{ji} > 0\}\, Y_{c,t,j} \ge k\right\}
      \;\vee\; Y_{c,t,i}.
\]

\paragraph{Hawkes (self-exciting) \citep{hawkes1971spectra}.}
Each active node generates an excitation that decays exponentially in time and triggers offspring activations on its out-neighbors.  We discretize so that the activation probability of $i$ at time $t+1$ is
\[
  \Pr(Y_{c,t+1,i} = 1 \mid Y_{c,t})
    = \sigma\!\Bigl(\beta_0 + \beta_1 \sum_{s \le t} \sum_j W_{ji}\, e^{-\kappa(t - s)}\, \mathbf{1}\{j \text{ activated at } s\}\Bigr),
\]
with decay $\kappa = 0.5$ and intensities $\beta_0 = -3$, $\beta_1 = 1$.  Adoption is reversible.

\paragraph{Nonlinear DGP.}
A flexible specification designed to violate the parametric assumptions of every classical baseline.  The transition is sigmoid-of-nonlinear-features:
\[
  \Pr(Y_{c,t+1,i} = 1 \mid Y_{c,t})
    = \sigma\!\bigl(b_i + (W Y_{c,t})_i + \tfrac{1}{2} (W Y_{c,t})_i^3
                    + \sin\!\bigl(2 (W Y_{c,t})_i\bigr)
                    + \mathrm{ReLU}\!\bigl((W Y_{c,t})_i - 0.5\bigr)\bigr),
\]
with bias $b_i = -1$.  This DGP combines polynomial, oscillatory, and threshold-like features to test recovery under strongly misspecified competing methods.

\bigskip
For each DGP we generate $C_{\mathrm{train}} = 1{,}000$ training cascades of length $T = 10$--$30$, $C_{\mathrm{test}} = 1{,}000$ test cascades, and $C_{\mathrm{eval}} = 5{,}000$ held-out cascades used to compute the true mean Jacobian $J_0$ via finite differences on the conditional expectation operator.

\appsec{Additional Figures}{app:pr_curves}

\begin{figure}[ht]
  \centering
  \includegraphics[width=\textwidth]{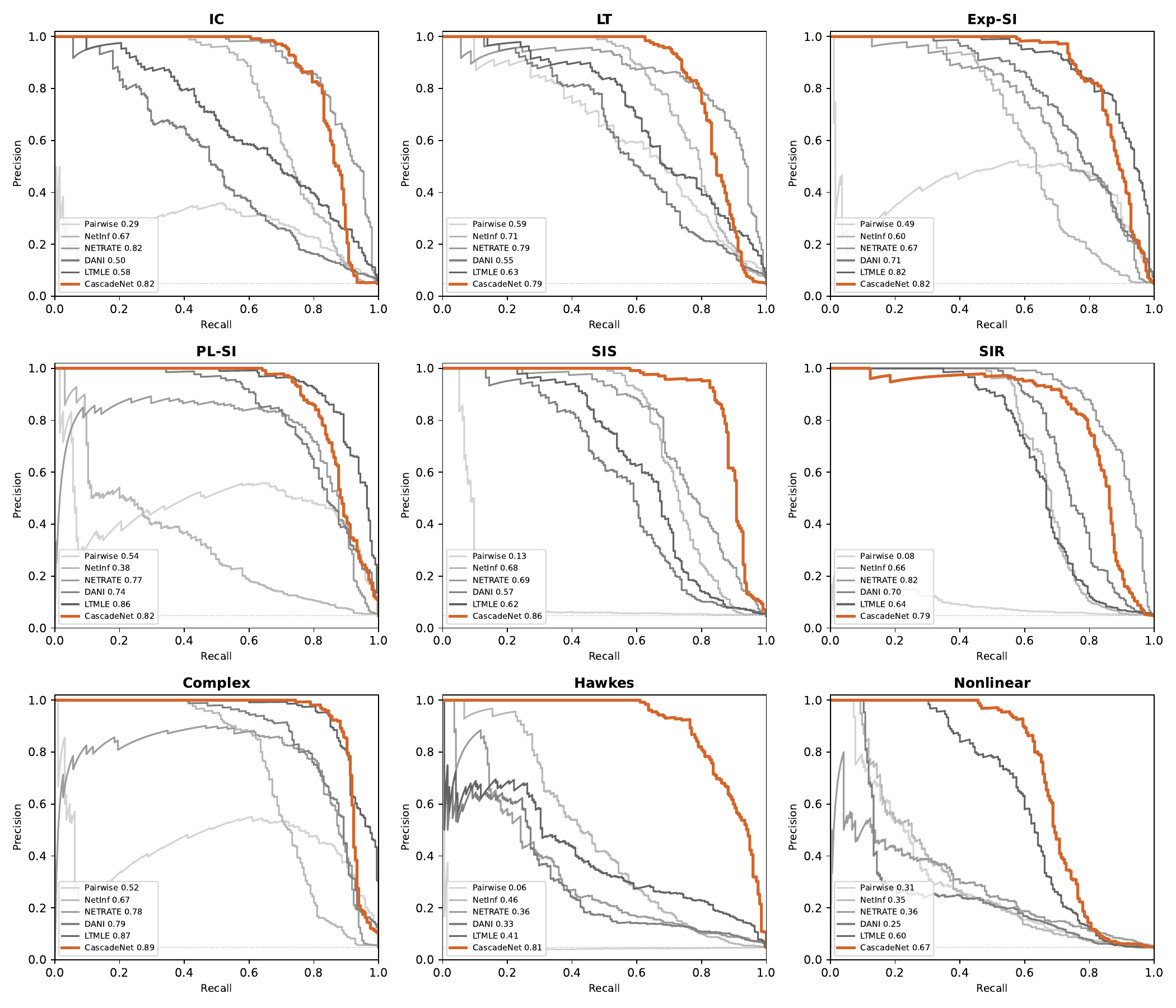}
  \caption{Precision-recall curves for edge detection across nine diffusion
    models ($C = 1{,}000$, ER network).  CascadeNet (orange) dominates all
    baselines, with particularly large margins on SIS, Hawkes, and
    Nonlinear dynamics.  Legend entries show BEP values.}
  \label{fig:pr_curves}
\end{figure}

\appsec{Proofs}{app:proofs}

\subsection{Proof of Neyman Orthogonality (Equation~\ref{eq:score})}
\label{pf:neyman}

The verification below follows the standard Neyman-orthogonality argument for Riesz-representer scores in \citet{chernozhukov2018double} and \citet{chernozhukov2022automatic}; we record it here for completeness, applied to the specific score that targets the network Jacobian.  We verify that the score in \eqref{eq:score} satisfies Neyman orthogonality at the truth for each fixed pair $(i, j)$.  For notational simplicity, write $\theta_0 := \thetaij^0$, $m_0 := m_{0,i}$, $\alpha_0 := \alphaij$, and define
\[
  \psi(W;\, \theta, m, \alpha)
    := \frac{\partial m(Z)}{\partial y_j}
       + \alpha(Z)\bigl(Y_i^+ - m(Z)\bigr) - \theta,
\]
where $W = (Z, Y^+)$ and $Y_i^+$ denotes the $i$-th coordinate of the next-period outcome.  Let $\delta_m$ be an arbitrary square-integrable perturbation of $m_0$, and let $\delta_\alpha$ be an arbitrary square-integrable perturbation of $\alpha_0$.  Consider the paths $m_r = m_0 + r\, \delta_m$ and $\alpha_r = \alpha_0 + r\, \delta_\alpha$.  We show that the Gateaux derivatives of the population moment $\Psi(\theta, m, \alpha) := \E[\psi(W;\, \theta, m, \alpha)]$ with respect to $m$ and $\alpha$, evaluated at $(\theta_0, m_0, \alpha_0)$, are both zero.

\emph{Insensitivity with respect to $m$.}
Differentiate $\Psi(\theta_0, m_r, \alpha_0)$ at $r = 0$:
\[
  \left.\frac{d}{dr} \Psi(\theta_0, m_r, \alpha_0)\right|_{r=0}
    = \E\!\left[\frac{\partial \delta_m(Z)}{\partial y_j} - \alpha_0(Z)\, \delta_m(Z)\right].
\]
By Definition~\ref{def:riesz}, $\E[\alpha_0(Z)\, \delta_m(Z)] = \E[\partial \delta_m(Z)/\partial y_j]$, so the derivative equals zero.

\emph{Insensitivity with respect to $\alpha$.}
Differentiate $\Psi(\theta_0, m_0, \alpha_r)$ at $r = 0$:
\[
  \left.\frac{d}{dr} \Psi(\theta_0, m_0, \alpha_r)\right|_{r=0}
    = \E\!\bigl[\delta_\alpha(Z)\bigl(Y_i^+ - m_0(Z)\bigr)\bigr].
\]
Using iterated expectations and the conditional mean-zero property $\E[Y_i^+ - m_0(Z) \mid Z] = 0$,
\[
  \E\!\bigl[\delta_\alpha(Z)\bigl(Y_i^+ - m_0(Z)\bigr)\bigr]
    = \E\!\bigl[\delta_\alpha(Z)\, \E[Y_i^+ - m_0(Z) \mid Z]\bigr]
    = 0.
\]
Therefore the score is Neyman-orthogonal at the truth.
\hfill$\square$

\subsection{Proof of Theorem~\ref{thm:rate_JK}}
\label{pf:thm_rate_JK}

The argument follows the standard cross-fitting / orthogonal-score template of \citet{chernozhukov2018double} and \citet{chernozhukov2022automatic}; the only modification relative to those references is that the unit of cross-fitting is the cascade trajectory rather than an i.i.d.\ observation, which is required because within-trajectory observations are dependent.  We prove asymptotic normality of the cross-fitted debiased estimator for a fixed pair $(i, j)$.  Write $\theta_0 := \thetaij^0$, $\alpha_0 := \alphaij$, and $m_0 := m_{0,i}$.  For each trajectory $c$, define the oracle trajectory-level score
\[
  \phi_0(W_c)
    := \frac{1}{T_c - 1} \sum_{t=1}^{T_c - 1}
       \psiij\!\bigl(Z_{c,t}, Y_{c,t+1};\, \theta_0, m_0, \alpha_0\bigr),
\]
where $\psiij$ is the score in \eqref{eq:score}.  By Assumption~\ref{ass:variance}, $\E[\phi_0(W_c)] = 0$ and $\E[\phi_0(W_c)^2] = V_{ij} \in (0, \infty)$.

Let $I_1, \ldots, I_K$ denote the cross-fitting folds.  For each $c \in I_k$, define the estimated trajectory-level signal
\[
  \hat{\phi}_c
    := \frac{1}{T_c - 1} \sum_{t=1}^{T_c - 1}
       \left[
         \frac{\partial \hat{m}_i^{(-k)}(Z_{c,t})}{\partial y_j}
         + \hat{\alpha}_{ij}^{(-k)}(Z_{c,t})\bigl(Y_{c,t+1,i} - \hat{m}_i^{(-k)}(Z_{c,t})\bigr)
         - \theta_0
       \right].
\]
Then $\hat{\theta}_{ij}^{\mathrm{deb}} - \theta_0 = n^{-1} \sum_{c=1}^n \hat{\phi}_c$, and we will show
\[
  \sqrt{n}\,(\hat{\theta}_{ij}^{\mathrm{deb}} - \theta_0)
    = \frac{1}{\sqrt{n}} \sum_{c=1}^n \phi_0(W_c) + o_p(1),
\]
from which the result follows by a central limit theorem.

\paragraph{Step 1: Decomposition.}
Write
\begin{equation}
  \label{eq:proof_decomp}
  \sqrt{n}\,(\hat{\theta}_{ij}^{\mathrm{deb}} - \theta_0)
    = \frac{1}{\sqrt{n}} \sum_{c=1}^n \phi_0(W_c)
      + \frac{1}{\sqrt{n}} \sum_{c=1}^n \bigl(\hat{\phi}_c - \phi_0(W_c)\bigr).
\end{equation}
The first term is the oracle empirical process; the second is the remainder from estimating the nuisances.

\paragraph{Step 2: CLT for the oracle term.}
By Assumption~\ref{ass:iid}, the trajectories $W_1, \ldots, W_n$ are i.i.d.  By Assumption~\ref{ass:variance}, $\E[\phi_0(W_c)] = 0$ and $\E[\phi_0(W_c)^2] = V_{ij} < \infty$.  The standard central limit theorem gives
\begin{equation}
  \label{eq:proof_clt}
  \frac{1}{\sqrt{n}} \sum_{c=1}^n \phi_0(W_c)
    \xrightarrow{d} \mathcal{N}(0, V_{ij}).
\end{equation}

\paragraph{Step 3: Expansion of the remainder.}
Fix a fold $k$ and condition on the training sample used to estimate $\hat{m}^{(-k)}$ and $\hat{\alpha}_{ij}^{(-k)}$.  For each held-out trajectory $c \in I_k$,
\[
  \hat{\phi}_c - \phi_0(W_c) = A_c + B_c + C_c + D_c,
\]
where
\begin{align*}
  A_c &:= \frac{1}{T_c - 1} \sum_{t=1}^{T_c - 1}
            \left[\frac{\partial \hat{m}_i^{(-k)}(Z_{c,t})}{\partial y_j}
                  - \frac{\partial m_0(Z_{c,t})}{\partial y_j}\right],
  \\
  B_c &:= \frac{1}{T_c - 1} \sum_{t=1}^{T_c - 1}
            \bigl(\hat{\alpha}_{ij}^{(-k)}(Z_{c,t}) - \alpha_0(Z_{c,t})\bigr)
            \bigl(Y_{c,t+1,i} - m_0(Z_{c,t})\bigr),
  \\
  C_c &:= -\frac{1}{T_c - 1} \sum_{t=1}^{T_c - 1}
            \alpha_0(Z_{c,t})\bigl(\hat{m}_i^{(-k)}(Z_{c,t}) - m_0(Z_{c,t})\bigr),
  \\
  D_c &:= -\frac{1}{T_c - 1} \sum_{t=1}^{T_c - 1}
            \bigl(\hat{\alpha}_{ij}^{(-k)}(Z_{c,t}) - \alpha_0(Z_{c,t})\bigr)
            \bigl(\hat{m}_i^{(-k)}(Z_{c,t}) - m_0(Z_{c,t})\bigr).
\end{align*}

\paragraph{Step 4: Orthogonality cancels the first-order terms.}
Conditional on the training sample, the held-out trajectories are independent of the nuisance estimators by cross-fitting.  For the terms $A_c$ and $C_c$, the Riesz property gives
\[
  \E[A_c + C_c \mid \text{training sample}]
    = \E\!\left[
        \frac{\partial (\hat{m}_i^{(-k)} - m_0)(Z)}{\partial y_j}
        - \alpha_0(Z)\bigl(\hat{m}_i^{(-k)} - m_0\bigr)(Z)
        \;\middle|\; \text{training sample}
      \right]
    = 0.
\]
For the term $B_c$, using $\E[Y_{c,t+1,i} - m_0(Z_{c,t}) \mid Z_{c,t}] = 0$,
\[
  \E[B_c \mid \text{training sample}] = 0.
\]
Thus the first-order nuisance effects cancel in expectation.

\paragraph{Step 5: Control of the second-order term.}
The remaining term $D_c$ is second order.  By Cauchy--Schwarz,
\[
  \bigl|\E[D_c \mid \text{training sample}]\bigr|
    \le \|\hat{\alpha}_{ij}^{(-k)} - \alpha_0\|_{L^2(P_Z)}
        \,\|\hat{m}_i^{(-k)} - m_0\|_{L^2(P_Z)}.
\]
By Assumption~\ref{ass:rates}, each factor is $o_p(n^{-1/4})$, so their product is $o_p(n^{-1/2})$, and hence $\E[D_c \mid \text{training sample}] = o_p(n^{-1/2})$.  Because trajectory lengths are uniformly bounded by Assumption~\ref{ass:iid}, averaging over trajectories preserves this rate, so $n^{-1/2} \sum_{c=1}^n D_c = o_p(1)$.  Similarly, the centered fluctuations of $A_c$, $B_c$, and $C_c$ are negligible under Assumptions~\ref{ass:smooth} and~\ref{ass:rates}, since the derivative estimation error converges to zero in $L^2$, the nuisance estimators are cross-fitted, and trajectory lengths are uniformly bounded.  Hence $n^{-1/2} \sum_{c=1}^n (A_c + B_c + C_c) = o_p(1)$.  Combining the previous two displays yields
\begin{equation}
  \label{eq:proof_remainder}
  \frac{1}{\sqrt{n}} \sum_{c=1}^n \bigl(\hat{\phi}_c - \phi_0(W_c)\bigr) = o_p(1).
\end{equation}

\paragraph{Step 6: Conclusion.}
Substituting \eqref{eq:proof_clt} and \eqref{eq:proof_remainder} into \eqref{eq:proof_decomp},
\[
  \sqrt{n}\,(\hat{\theta}_{ij}^{\mathrm{deb}} - \theta_0)
    \xrightarrow{d} \mathcal{N}(0, V_{ij}),
  \qquad
  V_{ij} = \E[\phi_0(W_c)^2].
\]
This proves Theorem~\ref{thm:rate_JK}.
\hfill$\square$

\subsection{Graph Neural Network Model and Sparse Gradient}
\label{pf:gnn_grad}

When the transition function involves complex nonlinear interactions
that a linear index cannot capture, CascadeNet can be implemented as
a Graph Neural Network using a GraphSAGE-style architecture
\citep{hamilton2017inductive}:
\begin{align}
  \mathbf{h}_i &= \sigma_{\mathrm{enc}}\!\bigl(W_{\mathrm{enc}}\,[y_{t,i},\, X_{t,i}]\bigr)
    \in \R^H,
    \label{eq:gnn_h}
  \\
  \mathrm{agg}_i &= A\, \mathbf{h}
    \qquad [A = \mathrm{softmax}(QK\trans/\sqrt{H})],
    \label{eq:gnn_agg}
  \\
  \hat{m}_i &= W_{\mathrm{dec}}\,[\mathbf{h}_i,\, \mathrm{agg}_i].
    \label{eq:gnn_dec}
\end{align}
Here $\mathbf{h}_i$ is the node embedding for agent $i$, encoding its
own state and covariates.  The attention matrix
$A = \mathrm{softmax}(QK\trans/\sqrt{H})$ captures how agent $i$
aggregates information from its neighbors, with $Q$ and $K$ as
learnable query and key matrices.  The decoder combines the node
embedding and the aggregated neighbor information to produce the
predicted transition $\hat{m}_i$.

With a known adjacency matrix $W$ (no self-loops, $W_{ii} = 0$),
the embedding gradient has a sparse structure:
\begin{equation}
  \label{eq:sparse_grad}
  \frac{\partial \embed_i}{\partial y_j} =
  \begin{cases}
    \bigl[\nabla_{y_i}\mathbf{h}_i,\; \mathbf{0}\bigr] & \text{if } j = i, \\[4pt]
    \bigl[\mathbf{0},\; W_{ij}\,\nabla_{y_j}\mathbf{h}_j\bigr] & \text{if } j \ne i.
  \end{cases}
\end{equation}

\emph{Proof.}
Decompose $\embed_i = [\mathbf{h}_i, \mathrm{agg}_i]$.
By \eqref{eq:gnn_h}, $\mathbf{h}_i$ depends only on $(y_i, X_i)$, so
$\partial \mathbf{h}_i / \partial y_j = \mathbf{0}$ for $j \ne i$.
By \eqref{eq:gnn_agg} with $W_{ii} = 0$,
$\partial \mathrm{agg}_i / \partial y_i = W_{ii}\,\nabla_{y_i}\mathbf{h}_i
= \mathbf{0}$ and
$\partial \mathrm{agg}_i / \partial y_j = W_{ij}\,\nabla_{y_j}\mathbf{h}_j$
for $j \ne i$.  Concatenating gives \eqref{eq:sparse_grad}.
\hfill$\square$

\subsection{Extension to Non-Markovian Dynamics}
\label{app:nonmarkov}

The Markovian assumption in \eqref{eq:dgp} requires that $Y_{c,t-1}$
affects $Y_{c,t+1}$ only through $Y_{c,t}$.  When this assumption is
violated, the framework can be extended by augmenting the state vector
to include lagged values.  Specifically, define the augmented state
$\tilde{Y}_{c,t} = (Y_{c,t}, Y_{c,t-1}, \ldots, Y_{c,t-L})$ for
some lag order $L \geq 1$.  The transition function then becomes
$\tilde{Y}_{c,t+1} = \tilde{m}_0(\tilde{Y}_{c,t}, X_c; \theta)$,
which is Markovian in the augmented state.  All results in the paper
carry through with $Y$ replaced by $\tilde{Y}$, at the cost of
increasing the dimensionality of the state from $N$ to $N(L+1)$.
In practice, one or two lags typically suffice for diffusion
processes with short memory.

\subsection{Supplementary Simulation: Debiasing Validation}
\label{app:debiasing_sim}

To directly validate the Riesz debiasing correction with analytical
ground truth, we present a simulation using a tanh DGP where the true
Jacobian admits a closed form.

\paragraph{Data-generating process.}
We simulate $N = 20$ nodes, $T = 10$ time steps, $R = 800$ trajectories:
\begin{equation}
  y_{t+1,i} = \tanh\!\bigl(\alpha\, y_{t,i}
    + \beta\,(Wy_t)_i + \gamma\, x_i\bigr) + \varepsilon_{t+1,i},
  \qquad \varepsilon_{t+1,i} \sim \mathcal{N}(0,\sigma^2),
\end{equation}
with $\alpha = 0.3$, $\beta = 0.5$, $\gamma = 1.0$, $\sigma = 0.1$.

\paragraph{Results.}
Table~\ref{tab:app:sim} reports results over 1{,}000 evaluation points.
The naive Jacobian is nearly uninformative ($r = 0.070$) due to
$\ell_2$ attenuation.  The Riesz correction recovers the signal
($r \to 0.768$), reducing Frobenius error by $1.70\times$.

\begin{table}[ht]
\centering
\caption{Debiasing Validation on Tanh DGP: Naive vs.\ Debiased Jacobian Estimates.}
\label{tab:app:sim}
\begin{tabular}{lcc}
\toprule
 & Naive & Debiased \\
\midrule
Mean Frobenius error  & 1.255 & \textbf{0.738} \\
Correlation $r$       & 0.070 & \textbf{0.768} \\
OLS $R^2$             & 0.005 & \textbf{0.589} \\
Error reduction       & \multicolumn{2}{c}{$\mathbf{1.70\times}$} \\
\bottomrule
\end{tabular}
\end{table}

\subsection{Robustness to Model Misspecification}
\label{app:misspec}

In the main text, the asymptotic results are derived under the
assumption that the transition model $m(y, x;\, \beta)$ is correctly
specified, so that there exists a parameter vector $\beta_0$ satisfying
\[
  \E[Y_{t+1} \mid Z_t] = m(Z_t;\, \beta_0),
  \qquad Z_t = (Y_t, X_t).
\]
In practice, the transition model may be misspecified.  In this
appendix we briefly discuss how the results extend to this case.

\paragraph{Pseudo-true parameter.}
When the model is misspecified, the estimator typically converges not
to a structural parameter but to a pseudo-true parameter
$\beta^\star$.  For example, under squared prediction loss,
\[
  \beta^\star
    \;\in\;
    \arg\min_{\beta}\;
      \E\!\bigl[\|Y_{t+1} - m(Z_t;\, \beta)\|^2\bigr].
\]
Equivalently, $\beta^\star$ satisfies the population score condition
\[
  \E\!\bigl[g(Z_t;\, \beta^\star)\,
    \bigl(Y_{t+1} - m(Z_t;\, \beta^\star)\bigr)\bigr] = 0,
\]
where $g(Z_t;\, \beta) := \partial_\beta m(Z_t;\, \beta)$.
Under standard regularity conditions, the debiased estimator satisfies
\[
  \sqrt{n}\,(\hat{\beta}^{\mathrm{deb}} - \beta^\star)
    \;\xrightarrow{d}\;
    \mathcal{N}(0,\, \Sigma_{\beta}^\star),
\]
with sandwich covariance
$\Sigma_{\beta}^\star = A_\star^{-1}\, \Omega_\star\, A_\star^{-1\top}$.
The debiased Jacobian estimator therefore continues to be
asymptotically normal under misspecification, centered at the
Jacobian implied by the best approximation $\beta^\star$ within the
model class.  This approximation bias is expected to be small when
the model class $\mathcal{M}$ is sufficiently flexible to closely
approximate the true conditional expectation.

\end{document}